\documentclass[11pt]{article}

\usepackage[margin=1in]{geometry}
\usepackage[T1]{fontenc}
\usepackage{times}
\usepackage{amsmath,amssymb,amsthm,mathtools}
\usepackage{booktabs,longtable,array,multirow}
\usepackage{enumitem}
\usepackage{xcolor}
\usepackage{graphicx}
\usepackage{float}
\usepackage[ruled,vlined,algo2e]{algorithm2e}
\usepackage{microtype}
\usepackage{natbib}
\usepackage{url}
\usepackage{hyperref}
\usepackage{prettyref}

\hypersetup{
  colorlinks=true,
  linkcolor=blue!55!black,
  citecolor=blue!55!black,
  urlcolor=blue!55!black
}

\allowdisplaybreaks
\setlist{leftmargin=2em,itemsep=0.15em,topsep=0.25em}

\newtheorem{theorem}{Theorem}[section]
\newtheorem{lemma}[theorem]{Lemma}

\newtheorem{assumption}[theorem]{Assumption}

\newcommand{\E}{\mathbb{E}}
\newcommand{\Prob}{\mathbb{P}}
\newcommand{\one}{\mathbf{1}}
\newcommand{\Reg}{\operatorname{Reg}}
\newcommand{\order}{\mathcal{O}}
\newcommand{\otil}{\widetilde{\order}}
\newcommand{\argmax}{\operatorname*{arg\,max}}

\newcommand{\dd}{\mathrm{d}}
\newcommand{\calP}{\mathcal{P}}
\newcommand{\calF}{\mathcal{F}}
\newcommand{\calG}{\mathcal{G}}

\newcommand{\pref}[1]{\prettyref{#1}}

\newcommand{\savehyperref}[2]{\texorpdfstring{\hyperref[#1]{#2}}{#2}}
\newrefformat{eq}{\savehyperref{#1}{Eq.~\textup{(\ref*{#1})}}}
\newrefformat{eqn}{\savehyperref{#1}{Eq.~(\ref*{#1})}}
\newrefformat{lem}{\savehyperref{#1}{Lemma~\ref*{#1}}}
\newrefformat{def}{\savehyperref{#1}{Definition~\ref*{#1}}}
\newrefformat{line}{\savehyperref{#1}{Line~\ref*{#1}}}
\newrefformat{thm}{\savehyperref{#1}{Theorem~\ref*{#1}}}
\newrefformat{corr}{\savehyperref{#1}{Corollary~\ref*{#1}}}
\newrefformat{cor}{\savehyperref{#1}{Corollary~\ref*{#1}}}
\newrefformat{sec}{\savehyperref{#1}{Section~\ref*{#1}}}
\newrefformat{app}{\savehyperref{#1}{Appendix~\ref*{#1}}}
\newrefformat{assum}{\savehyperref{#1}{Assumption~\ref*{#1}}}
\newrefformat{asm}{\savehyperref{#1}{Assumption~\ref*{#1}}}
\newrefformat{ex}{\savehyperref{#1}{Example~\ref*{#1}}}
\newrefformat{fig}{\savehyperref{#1}{Figure~\ref*{#1}}}
\newrefformat{alg}{\savehyperref{#1}{Algorithm~\ref*{#1}}}
\newrefformat{rem}{\savehyperref{#1}{Remark~\ref*{#1}}}
\newrefformat{conj}{\savehyperref{#1}{Conjecture~\ref*{#1}}}
\newrefformat{prop}{\savehyperref{#1}{Proposition~\ref*{#1}}}
\newrefformat{proto}{\savehyperref{#1}{Protocol~\ref*{#1}}}
\newrefformat{prob}{\savehyperref{#1}{Problem~\ref*{#1}}}
\newrefformat{claim}{\savehyperref{#1}{Claim~\ref*{#1}}}
\newrefformat{que}{\savehyperref{#1}{Question~\ref*{#1}}}
\newrefformat{op}{\savehyperref{#1}{Open Problem~\ref*{#1}}}
\newrefformat{fn}{\savehyperref{#1}{Footnote~\ref*{#1}}}
\newrefformat{eve}{\savehyperref{#1}{Event~\ref*{#1}}}
\newrefformat{tab}{\savehyperref{#1}{Table~\ref*{#1}}}

\RestyleAlgo{ruled}
\SetAlgoVlined
\SetKwInput{KwInput}{Input}
\SetKwInput{KwInitialize}{Initialize}

\title{Optimal Nonparametric Dynamic Pricing with Censored Demand and Adversarial Inventory}
\author{
Mengxiao Zhang\thanks{University of Iowa; \href{mailto:mengxiao-zhang@uiowa.edu}{mengxiao-zhang@uiowa.edu}}
\and
Yingfei Wang\thanks{University of Washington; \href{mailto:yingfei@uw.edu}{yingfei@uw.edu}}
\and
Haipeng Luo\thanks{University of Southern California; \href{mailto:haipengl@usc.edu}{haipengl@usc.edu}}
}
\date{}

\begin{document}
\maketitle

\begin{abstract}
We study online dynamic pricing with censored demand, where an arbitrary inventory level is revealed before pricing and may adapt to past observations, while demand follows an unknown, price-dependent distribution that is stationary over time.
For a horizon of $T$ rounds, \citet{xu2026dynamic} achieved $\otil(\sqrt{T})$ regret under restrictive structural assumptions including linear demand, price-independent additive noise, and conditions relating inventory levels to the noise support. Our first contribution is to extend this framework to a substantially more general and statistically harder nonparametric setting, requiring only the natural assumption that expected sales are nonincreasing in price and allowing nonlinear demand curves and price-dependent noise. For this model, we first propose a simple baseline, Double-Grid-UCB, which discretizes both price and inventory and achieves $\otil(T^{3/4})$ expected regret using separate revenue estimates for each price-inventory grid pair.
Then, we develop Threshold-UCB, which improves the expected regret to $\otil(T^{2/3})$. Unlike Double-Grid-UCB, Threshold-UCB reuses sales observations across inventory levels through shared estimates of demand-tail probabilities, allowing the same data to support revenue upper bounds for multiple inventories rather than a single inventory bin. We also complement this upper bound with an $\Omega(T^{2/3})$ lower bound via a reduction from stochastic posted pricing, establishing its minimax optimality. Finally, extensive experiments across inventory processes, demand functions, and noise models demonstrate consistently superior performance of Threshold-UCB over benchmark algorithms.

\end{abstract}

\section{Introduction}

Dynamic pricing is a central tool in revenue management because price controls
both the revenue earned per unit and the quantity demanded. Classical models characterize optimal prices when the demand law is known \citep{gallego1994optimal}. Subsequent work has studied how a seller can learn an unknown demand curve while simultaneously earning revenue \citep{kleinberg2003value,besbes2009dynamic,broder2012dynamic,keskin2014dynamic}, including contextual formulations in which demand depends on observed customer or product characteristics \citep{javanmard2019dynamic,xu2021logarithmic,luo2022contextual,xu2022towards}. 
In these models, inventory is typically unlimited, fixed in advance, or controlled by the seller.

In many applications, however, available inventory is limited, varies across selling periods, and is outside the seller's control. Consider a retailer receiving a highly perishable crop from a nearby farm. Daily supply depends on when the crop ripens and can vary sharply, while unsold units spoil before the next period. 
From the retailer’s perspective, this motivates treating the inventory sequence as arbitrary: in each period, the retailer observes the available inventory and then chooses a price. Demand, however, is censored by inventory: when a stockout occurs, the retailer observes only the number of units sold, 
but not the underlying demand.
Beyond censoring demand, the varying inventory also affects optimal pricing: the revenue-maximizing price depends on the available inventory, so the relevant benchmark selects a different price for each observed inventory level.

Motivated by such applications, \citet{xu2026dynamic} consider an adversarial-inventory model and propose an algorithm called C20CB, which achieves $\otil(\sqrt T)$ regret after $T$ rounds and is minimax optimal for their structured model up to logarithmic factors.\footnote{We use $\otil(\cdot)$ to suppress logarithmic factors.} Their result, however, relies heavily on several restrictive assumptions. First, potential demand must be linear in price with fixed coefficients, plus additive noise. Second, the noise must be bounded, zero-mean, and i.i.d., with a common price-independent law and a Lipschitz CDF. Third, inventory must satisfy support and interiority conditions that ensure complete censoring at the lowest price and uncensored demand at the highest price. The latter guarantees that true demand, rather than only sales truncated by inventory, is observed at the highest price. Together, these assumptions enable estimation of the linear demand parameters and common noise distribution. However, the resulting algorithm and analysis do not directly extend to nonlinear demand or price-dependent noise, which arise naturally in practice. This leaves the following question open.

\begin{center}
\vspace{0.2em}
\emph{Can one obtain sublinear regret for adversarial-inventory dynamic pricing
under nonparametric monotone demand in general? If so, what is the minimax-optimal
regret rate?}
\vspace{0.2em}
\end{center}

\paragraph{Contributions.}
We provide a complete answer to this question.
Our first contribution is to introduce
a substantially more general, nonparametric adversarial-inventory model, which requires only the
natural assumption that expected sales are nonincreasing in price, without imposing an additive-noise
representation or other parametric structure.
We then propose two algorithms for this model: the first, simpler algorithm achieves a suboptimal regret of $\otil(T^{3/4})$, while the second improves the rate to $\otil(T^{2/3})$. Both algorithms follow the upper confidence bound (UCB) principle, choosing the price with the largest optimistic estimate of its revenue under the current inventory.
Specifically,

\begin{itemize}[leftmargin=*]
\item In \pref{sec:double-grid}, we introduce Double-Grid-UCB, which discretizes both price and inventory and uses separate revenue estimates for each price-inventory pair to select prices optimistically. It achieves $\otil(PDT^{3/4})$ expected regret (\pref{thm:double-grid-upper}), where $P$ and $D$ are the largest possible price and inventory, respectively. However, learning each inventory bin separately leaves useful information from other inventory levels unused.

\item To improve this rate, in \pref{sec:threshold-construction}, we exploit the common demand distribution across inventory levels at each price. Specifically, we first consider the classical Kaplan-Meier estimator and show that its  confidence guarantee \emph{does not} directly apply to our setting with adaptive censoring. We therefore develop Threshold-UCB, which shares each sales observation across multiple demand thresholds and achieves $\otil(PDT^{2/3})$ expected regret (\pref{thm:regret-upper}). A matching $\Omega(T^{2/3})$ lower bound for $P=D=1$ establishes minimax optimality up to logarithmic factors (\pref{thm:lower}).

\item Finally, in \pref{sec:experiments}, we compare Threshold-UCB with C20CB~\citep{xu2026dynamic} and other benchmarks across linear and nonlinear demand, varying inventories, and additive and multiplicative noise. Threshold-UCB is competitive with C20CB in the regime covered by C20CB's theory and achieves lower mean regret in most tested regimes outside that regime.
\end{itemize}
\subsection{Related Work}\label{sec:related}

\paragraph{Dynamic pricing and posted-price learning.}
Dynamic pricing is a classical problem in revenue management, with foundational work on finite inventories under a known stochastic demand law \citep{gallego1994optimal}. When demand is unknown, \citet{kleinberg2003value} study online posted pricing and establish regret upper and lower bounds under identical, i.i.d., or adversarial buyer valuations. Our lower bound builds directly on their $\Omega(T^{2/3})$ construction under i.i.d. valuations, which we embed into our model through constant inventory and binary demand. Research on pricing and learning also considers different demand structures, ranging from linear models \citep{keskin2014dynamic} to general parametric choice models \citep{broder2012dynamic} and nonparametric demand \citep{besbes2009dynamic}. Contextual pricing further incorporates customer and product features into pricing decisions, including work on linear valuation models under known noise distributions \citep{javanmard2019dynamic,xu2021logarithmic} and extensions to unknown noise distributions under weaker assumptions \citep{luo2022contextual,xu2022towards}. 

\paragraph{Learning from lost sales.}
Censoring also plays a central role in data-driven inventory control, where sales reveal demand only up to the available inventory. Its implications for learning have been studied in the newsvendor problem
\citep{besbes2013implications}. Bayesian approaches update demand beliefs from censored sales \citep{chen2010bounds,bisi2011censored}, while nonparametric approaches use stochastic
gradient methods or Kaplan--Meier estimation to construct adaptive base-stock policies without observing unmet demand \citep{huh2009adaptive,huh2011adaptive}. When prices are also decision variables, joint pricing and inventory control
methods learn nonparametric demand while choosing both prices and stocking levels
\citep{chen2021nonparametric,chen2024optimal}. Within this joint-decision setting,
\citet{chen2020databased} additionally consider a constraint on the number of price changes under a parametric demand model. In contrast to this literature, our learner does not choose inventory
or use inventory decisions to control censoring.
Instead, inventory is revealed as a context before pricing and may adapt to past sales, while the learner chooses a price based on the current inventory.

\paragraph{Adversarially censored dynamic pricing.}
Closest to our work is \citet{xu2026dynamic}, which studies the same within-period
order of inventory, price, and censored sales and permits an adversarial
inventory sequence.  Their algorithm achieves $\otil(\sqrt T)$ regret under a more restrictive model that assumes linear mean demand, a common bounded
price-independent additive-noise CDF, regularity of that CDF, and inventory
support conditions.  The shared additive shock makes potential demand
automatically nonincreasing in price and enables observations at different
prices to be pooled to estimate common demand parameters and a common noise law.  
Our work removes this parametric structure by considering a broader nonparametric model, for which the worst-case regret rate is necessarily $T^{2/3}$. Thus, their $\Omega(\sqrt{T})$ lower bound and our $\Omega(T^{2/3})$ lower bound apply to different model classes and do not conflict.
See \pref{sec:comparison} for
detailed comparisons between the two models.

\section{Problem Formulation}\label{sec:model}

\paragraph{Notation.}
For a positive integer $n$, let $[n]\triangleq\{1,\ldots,n\}$.  We denote
$\one\{A\}$ for the indicator of an event $A$, and use $\E[\cdot]$ and
$\Prob(\cdot)$ for expectation and probability.  For $x\in\mathbb R$, let
$x_+\triangleq\max\{x,0\}$.

\paragraph{Problem setting.}
The learner and the environment interact for $T\geq 2$ rounds.  Let
$\calF_{t-1}$ denote the sigma-algebra generated by the learner's internal
random seed and all inventories, prices, and sales observed through round
$t-1$.  At the beginning of round $t$, the environment reveals an inventory
level $\gamma_t\in[0,D]$, where $D>0$ is the known maximum inventory.  The
conditional law of $\gamma_t$ may depend on $\calF_{t-1}$, allowing inventory
to adapt to the observed past.  The learner then selects a price
$p_t\in[0,P]$ using $\calF_{t-1}$ and $\gamma_t$, where $P>0$ is the maximum
allowed price.

After the price is posted, a nonnegative potential demand $Y_t(p_t)$ is
realized, where $Y_t(p)$ denotes demand at price $p$ before inventory
censoring.  The learner observes the sale
$X_t\triangleq\min\{Y_t(p_t),\gamma_t\}$ and earns realized revenue $r_t\triangleq p_tX_t$.
For the analysis, define the pre-demand sigma-algebra
$\calG_t\triangleq\sigma(\calF_{t-1},\gamma_t,p_t)$.  The random functions
$Y_t(\cdot):[0,P]\mapsto\mathbb{R}^+$ have the same law in every round, and
$Y_t(\cdot)$ is independent of $\calG_t$.  Thus, inventory and price may
depend on the observed past, but neither can anticipate the current demand
shock.  We impose no restriction on dependence across prices within a round,
as sales feedback is available only at the posted price.

For $p\in[0,P]$, define the demand CDF and survival function by
$F_p(u)\triangleq\Prob(Y_t(p)\le u)$ and
$S_p(u)\triangleq1-F_p(u)=\Prob(Y_t(p)>u)$, respectively.  At inventory
$\gamma$, we define the expected sales and revenue by
$
s(p,\gamma)
\triangleq\E\left[\min\{Y_t(p),\gamma\}\right]
=\int_0^\gamma S_p(u)\,\dd u$ and $
R(p,\gamma)\triangleq p\cdot s(p,\gamma).
$
The integral representation follows from the layer-cake identity for a
nonnegative random variable.

The only structural assumption we make is the following mild and natural monotonicity condition on expected sales.
\begin{assumption}[Monotone expected sales]\label{assum:monotone-demand}
For every inventory level $\gamma\in[0,D]$, the expected sales $s(p,\gamma)$ is nonincreasing in
$p$ on $[0,P]$.  Equivalently, $s(p,\gamma)\ge s(p',\gamma)$ for every
$0\le p\le p'\le P$.
\end{assumption}

\pref{assum:monotone-demand} is implied by first-order stochastic dominance of potential demand across prices. Specifically, if $p\le p'$ implies $S_p(u)\ge S_{p'}(u)$ for every $u\ge0$, then integrating this inequality over $[0,\gamma]$ gives the required monotonicity. In particular, the assumption holds whenever potential demands at different prices can be constructed using the same underlying randomness so that $p\mapsto Y_t(p)$ is nonincreasing almost surely. For example, if
$
Y_t(p)=\mu(p)+\epsilon_t
$
where $\mu(p)$ is nonincreasing in $p$, then using the same realization of $\epsilon_t$ for all prices ensures that $Y_t(p)\ge Y_t(p')$ whenever $p\le p'$. \pref{assum:monotone-demand} is used only for the later continuous-to-discrete approximation, and our model requires no smoothness, parametric form, or additional distributional relation for demand.

\paragraph{Goal.}
Our goal is to minimize expected regret relative to an oracle that knows the demand
law and optimizes expected revenue separately for each realized inventory.
More concretely, we define regret as the oracle's cumulative expected revenue minus that of
the learner:
\begin{align}
\Reg_T
\triangleq\sum_{t=1}^T
\sup_{p\in[0,P]}R(p,\gamma_t)-\sum_{t=1}^TR(p_t,\gamma_t).
\label{eq:continuous-regret}
\end{align}
All our algorithms use the uniform price grid defined by
$\calP_K\triangleq\{p_m=(m-1)P/K:m\in[K+1]\}$, which contains $K+1$ prices.
The integer $K\ge1$ may differ across algorithms. Since $\gamma_t\le D$ for all $t$, we have
$\sum_{t=1}^T\gamma_t\le DT$.
By \pref{assum:monotone-demand}, restricting the oracle to $\calP_K$ incurs
an additive approximation error of at most $PDT/K$.
Specifically, for $p\in[0,P]$, let
$\underline p$ be the largest grid price no greater than $p$.  By
\pref{assum:monotone-demand}, $s(\underline p,\gamma)\ge s(p,\gamma)$.
Moreover, $p-\underline p\le P/K$ and $s(p,\gamma)\le\gamma$, so $
R(\underline p,\gamma)
\ge \underline p\,s(p,\gamma)
\ge R(p,\gamma)-\frac{P\gamma}{K}.$ Taking the maximum over all prices in each round gives
\begin{align}
\Reg_T
\le \frac{PDT}{K}
+\sum_{t=1}^T
\left[\max_{m\in[K+1]}R(p_m,\gamma_t)-R(p_t,\gamma_t)\right]
\triangleq \frac{PDT}{K}+\Reg_T^{\calP_K}.
\label{eq:regret-decomposition}
\end{align}
The first term is the price-grid approximation error, while
$\Reg_T^{\calP_K}$ is the regret relative to the best grid price at each round. We emphasize again that this continuous-to-discrete approximation is the only place where \pref{assum:monotone-demand} is utilized, and the rest of the analysis does not require it.

\subsection{Relation to the linear-additive model of~\citet{xu2026dynamic}.}\label{sec:comparison}
In this section, we discuss how our model is substantially more general and challenging than that of~\citet{xu2026dynamic}.
In our notation, their model assumes
$Y_t(p)=a-bp+N_t$ for $p\in[0,P]$, with unknown parameters
$0<a\le a_{\max}$ and $0<b_{\min}\le b\le b_{\max}$.
The additive noise variables $N_t$ are i.i.d. with zero mean and follow a common,
price-independent distribution supported on $[-c,c]$ with a Lipschitz CDF.
The constants $a_{\max},b_{\min},b_{\max},c,P$ and additionally an inventory lower
bound $\gamma_{\min}$ are assumed to be known.  Their inventory conditions require, for
every round $t$,
\begin{align}
2c<\gamma_t<a-c,
\qquad
\gamma_t\ge\gamma_{\min}>
a_{\max}-b_{\min}P+c.
\label{eq:linear-inventory-support}
\end{align}
They additionally require $a-bP-c>0$ and
$P\ge a/(2b)$.
Thus, inventory exceeds the width of the noise support but remains below the
smallest possible demand at price zero.  At the maximum price $P$, by
contrast, even the largest possible demand lies strictly below every
admissible inventory.
Note that the condition $b>0$
also ensures that \pref{assum:monotone-demand} holds.
Their C20CB algorithm achieves $\otil(\sqrt T)$ regret under these assumptions when the horizon is sufficiently large.  

These conditions are restrictive, and they also make learning much easier. The known gap in \pref{eq:linear-inventory-support} guarantees a nonempty price interval ending at $P$ on which demand is never censored, whatever the inventory. Uncensored observations at two prices in this interval identify $a$, $b$, and the noise distribution. A linear shift of these observations then gives an unbiased revenue estimate at every price and inventory level, so censoring never has to be handled (see \pref{app:xu-comparison} for details). Our model, by contrast, assumes only monotone expected sales and imposes no additive-noise
representation, allowing nonlinear, heteroskedastic, and price-dependent demand distributions. The $\Omega(T^{2/3})$ lower bound in \pref{app:lower} shows the statistical cost of this broader class, even when inventory is constant and censoring disappears.

\section{Algorithm}
\label{sec:algorithm}
In this section, we introduce two upper confidence bound (UCB) algorithms for our problem. As a warm-up, in
\pref{sec:double-grid}, we introduce Double-Grid-UCB, a simple baseline that discretizes
both price and inventory, maintains separate revenue estimates for each price-inventory grid pair, and achieves $\otil(PDT^{3/4})$ expected regret.
In \pref{sec:threshold-construction}, we develop Threshold-UCB, which improves this rate to
$\otil(PDT^{2/3})$ by reusing sales observations across inventory levels through shared estimates of demand-tail probabilities. Below, we first present a general UCB framework and its regret analysis that are shared by the two algorithms.

Fix the price grid
$\calP_K=\{p_m=(m-1)P/K:m\in[K+1]\}$.
After observing $\gamma_t$ at round $t$, the algorithm computes an index
$\mathsf U_t(m,\gamma_t)$ for each price $p_m\in\calP_K$. Suppose that we can
construct a uniform confidence event $\mathcal E$ with
$\Pr(\mathcal E^c)\le 1/T$, on which
\begin{equation}
R(p_m,\gamma_t)\le \mathsf U_t(m,\gamma_t),
\label{eq:ucb-template}
\end{equation}
for every $m\in[K+1]$ and $t\in[T]$.
Thus, $\mathsf U_t(m,\gamma_t)$ is an upper confidence bound on the
expected revenue at price $p_m$ under the current inventory. The algorithm simply
selects
\(
m_t\in\argmax_{m\in[K+1]}\mathsf U_t(m,\gamma_t),
\)
posts price $p_t=p_{m_t}$, and updates its estimates after seeing the sale
$X_t$. 

Let
$m_t^\star\in\arg\max_{m\in[K+1]}R(p_m,\gamma_t)$
denote the index of a best grid price for the current inventory. On $\mathcal E$,
the finite-grid regret in round $t$ satisfies
\begin{align}
&R(p_{m_t^\star},\gamma_t)-R(p_t,\gamma_t)\nonumber\\
&=
\bigl[R(p_{m_t^\star},\gamma_t)
      -\mathsf U_t(m_t^\star,\gamma_t)\bigr]
+
\bigl[\mathsf U_t(m_t^\star,\gamma_t)
      -\mathsf U_t(m_t,\gamma_t)\bigr]
+
\bigl[\mathsf U_t(m_t,\gamma_t)-R(p_t,\gamma_t)\bigr]
\nonumber\\
&\le
\mathsf U_t(m_t,\gamma_t)-R(p_t,\gamma_t),
\label{eq:ucb-selected-excess}
\end{align}
where the inequality holds because the first bracketed term is nonpositive by \pref{eq:ucb-template}, and the second
is nonpositive because $m_t$ maximizes the index. It therefore remains to
bound the overestimation error at the selected price. For
nonnegative widths satisfying
\(
\mathsf U_t(m,\gamma_t)-R(p_m,\gamma_t)
\le w_t(m,\gamma_t)
\)
on $\mathcal E$, the finite-grid regret in round $t$ is thus at most
$w_t(m_t,\gamma_t)$. Moreover, on every sample path, we have
$0\le\Reg_T\le PDT$, since
$\gamma_t\le D$ and revenue is at most $P\gamma_t$ in each round.
The failure event therefore contributes at most
$PDT\Pr(\mathcal E^c)\le PD$ to expected regret. Combining this observation
with the price-grid approximation gives
\begin{equation}
\E[\Reg_T] \leq \frac{PDT}{K} + \E[\Reg_T^{\calP_K}]
\le \frac{PDT}{K}
   +\E\left[\sum_{t=1}^T w_t(m_t,\gamma_t)\right]
   +PD.
\label{eq:ucb-regret-template}
\end{equation}
It remains to construct UCB indices satisfying \pref{eq:ucb-template} and obtain a tight bound on the expected sum of the width $w_t(m,\gamma_t)$.

\subsection{Warm-Up: Double-Grid-UCB with
\texorpdfstring{$\otil(T^{3/4})$}{T^(3/4)} expected regret}
\label{sec:double-grid}
As a warm-up, we first instantiate the preceding UCB framework with Double-Grid-UCB, which maintains a separate revenue
estimate for each price and inventory bin, as summarized in
\pref{alg:double-grid-ucb} in \pref{app:double-grid-proof}. Specifically, in addition to the price grid, we divide
the inventory interval $[0,D]$ into $J$ bins of width $\Delta\triangleq D/J$.
Define the associated grid points by $u_j\triangleq j\Delta$ for $j\in\{0,\ldots,J\}$ and the bin assignment by
$
\ell(\gamma)\triangleq\min\{J,1+\lfloor\gamma/\Delta\rfloor\}
$
for $\gamma\in[0,D]$.
The lower representative of bin $\ell\in[J]$ is $u_{\ell-1}$, so we have
$u_{\ell(\gamma)-1}\le\gamma$ and
$\gamma-u_{\ell(\gamma)-1}\le\Delta$.
After observing $\gamma_t$, the algorithm rounds it down to $u_{\ell_t-1}$, where
$\ell_t\triangleq\ell(\gamma_t)$.
Since $0\le R(p,\gamma_t)-R(p,u_{\ell_t-1})\le P\Delta$ for every $p\in[0,P]$, the finite-grid regret satisfies
\begin{equation}
\Reg_T^{\calP_K}\le \frac{PDT}{J}+\sum_{t=1}^T\left[\max_{m\in[K+1]}R(p_m,u_{\ell_t-1})-R(p_t,u_{\ell_t-1})\right].
\label{eq:double-grid-rounded-regret}
\end{equation}
Thus, after paying the inventory-rounding cost $PDT/J$, it suffices to apply the preceding UCB argument at the grid representatives.
To estimate the revenue at the representative inventory level $u_{\ell_t-1}$, we exploit the fact that $u_{\ell_t-1}\le\gamma_t$. Although demand $Y_t(p_t)$ is only observed through the censored sales $X_t=\min\{Y_t(p_t),\gamma_t\}$, censoring at $\gamma_t$ does not affect its truncation at $u_{\ell_t-1}$. Hence, we can construct the observable clipped revenue
$p_t\min\{X_t,u_{\ell_t-1}\}=p_t\min\{Y_t(p_t),u_{\ell_t-1}\}$,
where the equality follows from $u_{\ell_t-1}\le\gamma_t$. This quantity therefore provides an uncensored observation of the revenue associated with truncating demand at $u_{\ell_t-1}$.
For each price-inventory pair $(m,\ell)$, denote by $N^{\mathrm{dg}}_{t,m,\ell}$ its number
of ``visits'' (that is, event of posting price $p_m$ at rounded inventory $u_{\ell-1}$) before round $t$, and by $Q^{\mathrm{dg}}_{t,m,\ell}$ the
sum of the corresponding clipped revenues. Define
$\widehat R^{\mathrm{dg}}_{t,m,\ell}
\triangleq Q^{\mathrm{dg}}_{t,m,\ell}/\max\{1,N^{\mathrm{dg}}_{t,m,\ell}\}$
as the empirical estimate of $R(p_m,u_{\ell-1})$.
Define the confidence radius by
\begin{equation}
C^{\mathrm{dg}}_{t,m,\ell}
\triangleq PD\sqrt{
\frac{\log(2(K+1)JT^2)}{2\max\{1,N^{\mathrm{dg}}_{t,m,\ell}\}}}.
\label{eq:double-grid-radius}
\end{equation}
Then a standard Hoeffding bound shows that, with probability at least $1-1/T$,
\begin{equation}
\left|\widehat R^{\mathrm{dg}}_{t,m,\ell}-R(p_m,u_{\ell-1})\right|\le C^{\mathrm{dg}}_{t,m,\ell},
\label{eq:double-grid-confidence-main}
\end{equation}
for every $m\in[K+1]$, $\ell\in[J]$, and $t\in[T]$. Denote this confidence event by $\mathcal E$. This confidence bound allows us to construct a revenue UCB for each price $p_m$ at inventory representative $u_{\ell-1}$ as
$
\mathsf U_t^{\mathrm{dg}}(m,u_{\ell-1})
\triangleq\widehat R^{\mathrm{dg}}_{t,m,\ell}
+C^{\mathrm{dg}}_{t,m,\ell}.$
The algorithm then selects the price maximizing
$\mathsf U_t^{\mathrm{dg}}(m,u_{\ell_t-1})$ and updates only the selected
pair $(m_t,\ell_t)$ after receiving $X_t$. Moreover, \pref{eq:double-grid-confidence-main} implies that, on $\mathcal E$, we have
$
0\le\mathsf U_t^{\mathrm{dg}}(m,u_{\ell-1})-R(p_m,u_{\ell-1})\le2C^{\mathrm{dg}}_{t,m,\ell},
$
for every $m\in[K+1]$, $\ell\in[J]$, and $t\in[T]$.
Thus, we can take $w_t(m,u_{\ell-1})\triangleq2C^{\mathrm{dg}}_{t,m,\ell}$ in the preceding UCB argument. By \pref{eq:ucb-selected-excess}, the regret at the rounded inventory in round $t$ is at most $w_t(m_t,u_{\ell_t-1})$.
Summing these widths over visits to the $(K+1)J$ pairs and adding the inventory-rounding cost from \pref{eq:double-grid-rounded-regret}, together with the price-grid and failure-event terms from \pref{eq:ucb-regret-template}, yields the following guarantee.
\begin{theorem}
\label{thm:double-grid-upper}
Suppose
\pref{assum:monotone-demand} holds. \pref{alg:double-grid-ucb} with $K=J=\lceil T^{1/4}\log^{-1/4}T\rceil$ guarantees that
$\E[\Reg_T]=\order(PDT^{3/4}\log^{1/4}T)$.
\end{theorem}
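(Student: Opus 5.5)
The plan is to combine the three ingredients already assembled in the text: the price-grid approximation \pref{eq:regret-decomposition}, the inventory-rounding bound \pref{eq:double-grid-rounded-regret}, and the UCB template \pref{eq:ucb-regret-template} applied at the rounded inventories $u_{\ell_t-1}$. Doing so gives
\begin{equation*}
\E[\Reg_T]\le \frac{PDT}{K}+\frac{PDT}{J}+\E\left[\sum_{t=1}^T 2C^{\mathrm{dg}}_{t,m_t,\ell_t}\right]+PD .
\end{equation*}
Two small checks are needed to justify this.

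\textbf{Failure-event term.} The failure event is handled exactly as in \pref{eq:ucb-regret-template}. The rounded-inventory regret in each round is still at most $P\gamma_t\le PD$, so $\mathcal E^c$ contributes at most $PD$.

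\textbf{Confidence event.} The event \pref{eq:double-grid-confidence-main} holds with probability at least $1-1/T$. Because visits are chosen adaptively, I would verify this with the standard ``stack of rewards'' argument. For each pair $(m,\ell)$, conceptually preassign i.i.d. clipped revenues $p_m\min\{Y^{(n)}(p_m),u_{\ell-1}\}$ for $n=1,\dots,T$. These lie in $[0,PD]$ and have mean $R(p_m,u_{\ell-1})$; this uses $u_{\ell_t-1}\le\gamma_t$ and the independence of $Y_t$ from $\calG_t$. Alternatively, apply Azuma--Hoeffding to the martingale differences along visits to $(m,\ell)$. Then apply Hoeffding for each fixed sample count $n\in[T]$ and take a union bound over $(K+1)J$ pairs and $T$ counts, with per-instance failure probability $1/((K+1)JT^2)$ (the factor $2$ in the log covers the two tails). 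The case $N=0$ is trivial, since the radius is at least $PD$.

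\textbf{Main term.} The main computation is bounding the width sum $\sum_t C^{\mathrm{dg}}_{t,m_t,\ell_t}$ deterministically. Group rounds by the visited pair. The $n$-th visit to $(m,\ell)$ has $\max\{1,N\}=\max\{1,n-1\}$, so pair $(m,\ell)$ with $N_{m,\ell}$ total visits contributes at most
\begin{equation*}
PD\sqrt{\log(2(K+1)JT^2)/2}\,\bigl(1+2\sqrt{N_{m,\ell}}\bigr).
\end{equation*}
Summing over pairs and using Cauchy--Schwarz with $\sum_{m,\ell}N_{m,\ell}=T$ gives
\begin{equation*}
\order\!\left(PD\sqrt{\log(KJT)}\bigl(KJ+\sqrt{KJ\,T}\bigr)\right).
\end{equation*}

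\textbf{Choosing parameters.} With $K=J=\lceil T^{1/4}\log^{-1/4}T\rceil$ we have $KJ\approx T^{1/2}\log^{-1/2}T$ and $\log(KJT)=\order(\log T)$. Hence:
\begin{itemize}
\item the discretization terms give $2PDT/K=\order(PDT^{3/4}\log^{1/4}T)$;
\item the $\sqrt{KJT}$ term gives $PD\sqrt{\log T}\cdot T^{3/4}\log^{-1/4}T=\order(PDT^{3/4}\log^{1/4}T)$;
\item the lower-order $KJ\sqrt{\log T}=\order(T^{1/2})$ and $PD$ terms are dominated.
\end{itemize}
The ceiling only matters for small $T$, where the bound is trivial because $\Reg_T\le PDT$.

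\textbf{Main obstacle.} I expect the main subtlety to be the rigorous justification of the confidence event under adaptive selection and adaptive inventory. In particular, one must check that the clipped observation at pair $(m_t,\ell_t)$ has conditional mean exactly $R(p_m,u_{\ell-1})$ given $\calG_t$. This is where $u_{\ell_t-1}\le\gamma_t$ and the independence of $Y_t(\cdot)$ from $\calG_t$ are essential. The rest is routine bookkeeping.
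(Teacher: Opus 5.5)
Your proposal is correct and follows essentially the same route as the paper. It uses the same decomposition into price-grid error, inventory-rounding error, selected widths $2C^{\mathrm{dg}}_{t,m_t,\ell_t}$ on the Hoeffding-plus-union-bound confidence event, and a $PD$ failure term, with the width sum bounded per pair and then by Cauchy--Schwarz using $\sum_{m,\ell}N_{m,\ell}=T$. The only differences are cosmetic: your per-pair bound $1+2\sqrt{N_{m,\ell}}$ is slightly looser than the paper's $2\sqrt{N_{m,\ell}}$, which adds a harmless lower-order $KJ\sqrt{\log T}$ term. You also justify the i.i.d.\ structure along adaptive visits via a stack-of-rewards or Azuma argument, whereas the paper asserts it directly.
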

The complete proof is deferred to \pref{app:double-grid-proof}. While \pref{thm:double-grid-upper} establishes the first sublinear regret guarantee for this problem, the resulting $T^{3/4}$ rate still leaves room for improvement. The loss arises from treating inventory bins separately: a finer grid reduces the inventory-rounding error $PDT/J$, but increases the estimation term by a factor of $\sqrt J$. In particular, each observation updates only a single inventory bin, even though revenues across different inventory levels are governed by the same underlying demand distribution. This suggests constructing revenue UCBs jointly across inventory levels, allowing observations to be shared rather than estimating each bin in isolation.

\subsection{Threshold-UCB with
\texorpdfstring{$\otil(T^{2/3})$}{T^(2/3)} expected regret}
\label{sec:threshold-construction}

To share observations across inventory levels, we exploit the fact that
inventory changes how demand is censored, but not the demand distribution at a
fixed price. Specifically,
since $R(p_m,\gamma)=p_m\int_0^\gamma S_{p_m}(u)\,\mathrm du$, an estimate of
the survival function $S_{p_m}$ yields revenue estimates across inventory
levels. The difficulty is that we observe only the censored sales
$X_t=\min\{Y_t(p_t),\gamma_t\}$ rather than the demand realization itself.

\paragraph{Why Kaplan--Meier does not directly apply.}
A natural candidate is the Kaplan--Meier (KM) estimator
\citep{kaplan1958nonparametric}, which is designed for right-censored data
and has also been used for inventory learning \citep{huh2011adaptive}.
In the classical setting, one observes
$W=\min\{Y,C\}$ together with the censoring indicator
$\delta=\one\{Y\le C\}$, where $Y$ is the quantity of interest and $C$ is
the censoring level. Given a collection of such observations, the KM
estimator pools information across different censoring levels through the
product-limit construction
$
\widehat S_{\mathrm{KM}}(y)
=
\prod_{x\le y}
\left(1-\frac{d(x)}{n(x)}\right),
$
where $d(x)$ counts observations with $W=x$ and $\delta=1$, and
$n(x)$ counts observations with $W\ge x$.For i.i.d. observations with independent $Y$ and $C$,
results such as \citet{bitouze1999dkm} provide uniform error bounds for the KM estimator.

However, the required i.i.d. censoring structure is not guaranteed in our setting.
Indeed, the inventory $\gamma_t$ is observed before the price $p_t$ is selected, and
both decisions may depend on past sales. Thus, the rule selecting censoring
levels for a fixed price can change over time, and its retained observations
need not form an i.i.d. sample. Current demand still remains independent of
the pre-demand information, but this does not ensure an i.i.d. censoring history.
Therefore, the classical KM confidence guarantee does not directly apply.
We also evaluate a UCB-style baseline based on the KM estimator in
\pref{sec:experiments}, which indeed incurs
substantially higher regret than our algorithm empirically.

\paragraph{Our algorithm: Threshold-UCB.}
We are now ready to present our key idea.
As in \pref{alg:double-grid-ucb}, we still keep the price grid $\calP_K$ and optimistic price selection. However, instead of estimating revenue separately for each inventory bin, the key idea is to approximate $R(p_m,\gamma)=p_m\int_0^\gamma S_{p_m}(u)\,\mathrm du$ by a sum of survival probabilities, whose estimates can be shared across inventory levels at each price. Specifically, set $\Delta=D/J$ and $u_j\triangleq j\Delta$ for $j\in\{0,1,\dots,J\}$, where $J\ge2$, and define the discretized revenue as
\begin{equation}
R_m^{(J)}(\gamma)\triangleq p_m\Delta\sum_{j\in[J]:u_{j-1}<\gamma}S_{p_m}(u_{j-1}).
\label{eq:threshold-discrete-revenue}
\end{equation}
We then estimate each $S_{p_m}(u_{j-1})=1-F_{p_m}(u_{j-1})$ by $1-\widehat F_{t,m,j}$, where $\widehat F_{t,m,j}$ estimates the CDF value $F_{p_m}(\cdot)$ at threshold $u_{j-1}$. These probabilities depend on the price and threshold, but not on inventory. Therefore, observations collected at different inventories can estimate the same probability, and changing $\gamma$ only changes which estimates enter the revenue sum in \pref{eq:threshold-discrete-revenue}.

To construct $\widehat F_{t,m,j}$, we use the fact that whenever $u_{j-1}<\gamma_t$, the sale reveals the indicator $\one\{Y_t(p_t)\le u_{j-1}\}$ exactly as $\one\{X_t\le u_{j-1}\}$. Denote by $N_{t,m,j}$ the number of earlier rounds $s< t$ with posted price index $m_s=m$ and inventory $\gamma_s>u_{j-1}$, and by $Z_{t,m,j}$ the number of those rounds with sales $X_s\le u_{j-1}$ (see the updates in Line~\ref{algline:threshold-count-Z} of \pref{alg:threshold-ucb}). We then define $\widehat F_{t,m,j}\triangleq Z_{t,m,j}/\max\{1,N_{t,m,j}\}$. Therefore, unlike \pref{alg:double-grid-ucb}'s revenue estimate for a single bin, this estimate pools all observations at price $p_m$ whose inventories exceed $u_{j-1}$.

To see how accurate this estimate is, the same Hoeffding argument gives $|\widehat F_{t,m,j}-F_{p_m}(u_{j-1})|\le C^{\mathrm{th}}_{t,m,j}$ simultaneously for all $m\in[K+1]$, $j\in[J]$, and $t\in[T]$, with probability at least $1-1/T$, where $C^{\mathrm{th}}_{t,m,j}$ is initialized and updated in Lines~\ref{algline:threshold-radius-init} and~\ref{algline:threshold-radius-update} of \pref{alg:threshold-ucb}. For a fixed $(m,j)$, whether round $s$ contributes to the estimate depends only on the conditions $m_s=m$ and $\gamma_s>u_{j-1}$, both determined before current demand is realized. Each retained indicator therefore has conditional mean $F_{p_m}(u_{j-1})$, which is sufficient for the Hoeffding argument under adaptive sampling. In contrast, the classical KM confidence bound requires i.i.d. demand-censoring pairs.

With these estimators, we construct a UCB for the discretized revenue $R_m^{(J)}(\gamma)$ by combining the survival estimates $1-\widehat F_{t,m,j}$ with their confidence radii $C^{\mathrm{th}}_{t,m,j}$. Specifically, we define $\mathsf U_t^{\mathrm{th}}(m,\gamma)\triangleq p_m\Delta\sum_{j\in[J]:u_{j-1}<\gamma}(1-\widehat F_{t,m,j})+B_{t,m}(\gamma)$, where $B_{t,m}(\gamma)\triangleq p_m\Delta\sum_{j\in[J]:u_{j-1}<\gamma}C^{\mathrm{th}}_{t,m,j}$ and empty sums equal zero. On this uniform confidence event, the preceding concentration bound gives $0\le\mathsf U_t^{\mathrm{th}}(m,\gamma)-R_m^{(J)}(\gamma)\le2B_{t,m}(\gamma)$, so we can take $w_t(m,\gamma)\triangleq2B_{t,m}(\gamma)$ in the UCB argument for the discretized revenues. It therefore remains to bound the sum of the selected bonuses.

To bound this sum, we exploit the pooling of observations across inventory levels. For a fixed price $p$ and threshold $\gamma$, observations from different inventory bins estimate the same CDF value and therefore increase a single shared count. The accumulated confidence radii then scale with the square root of the pooled observation count, rather than the sum of separate square-root costs across bins. This avoids the additional uncertainty cost incurred by the separate inventory-bin estimates in \pref{alg:double-grid-ucb}. For each threshold, only $K+1$ pooled estimates remain, and the same confidence-radius summation gives a total contribution of $\order(\sqrt{KT\log(KJT)})$.

Moreover, every threshold contributing to the current bonus receives an observation on that round, so each contribution can be charged to an update of its shared count. Summing over the $J$ thresholds introduces a factor of $J$, which is offset by their integration weight $\Delta=D/J$. Therefore, we obtain
$
\sum_{t=1}^T B_{t,m_t}(\gamma_t)
=
\order\left(P\Delta J\sqrt{KT\log(KJT)}\right)
=
\order\left(PD\sqrt{KT\log(KJT)}\right).
$
The remaining dependence on $J$ in this estimation bound comes only from the logarithmic confidence adjustment needed to cover all prices, thresholds, and rounds simultaneously. Thus, refining the threshold grid reduces the integration error without introducing an additional $\sqrt J$ estimation factor. Combining this bound with the selected widths, the price-grid error $PDT/K$, the integration error $PDT/J$, and the failure-event contribution $PD$ yields the following guarantee.

\begingroup
\LinesNumbered
\begin{algorithm2e}[t]
\caption{Threshold-UCB}
\label{alg:threshold-ucb}
\KwIn{Price grid $\calP_K=\{p_m=(m-1)P/K:m\in[K+1]\}$, maximum inventory $D$, integer $J\ge2$, and horizon $T$}
Set $\Delta=D/J$ and $u_j=j\Delta$ for $j\in\{0,1,\dots,J\}$\;

For all $(m,j)\in[K+1]\times[J]$, initialize
$N_{1,m,j}=Z_{1,m,j}=\widehat F_{1,m,j}=0$\;
Set $C^{\mathrm{th}}_{1,m,j}=\sqrt{\log(2(K+1)JT^2)/2}$ for all $(m,j)\in[K+1]\times[J]$\nllabel{algline:threshold-radius-init}\;

\For{$t=1,\ldots,T$}{
Observe $\gamma_t$ and choose
$
m_t\in\arg\max_{m\in[K+1]}
p_m\Delta\sum_{j\in[J]:u_{j-1}<\gamma_t}
\big(1-\widehat F_{t,m,j}+C^{\mathrm{th}}_{t,m,j}\big)
$\;

Decide $p_t=p_{m_t}$ and observe sales $X_t=\min\{Y_t(p_t),\gamma_t\}$.

\For{$j\in[J]$ such that $u_{j-1}<\gamma_t$}{
Set $N_{t+1,m_t,j}=N_{t,m_t,j}+1$\nllabel{algline:threshold-count-N} and 
Set $Z_{t+1,m_t,j}=Z_{t,m_t,j}+\one\{X_t\le u_{j-1}\}$\nllabel{algline:threshold-count-Z}\;

Set
$\widehat F_{t+1,m_t,j}
=Z_{t+1,m_t,j}/N_{t+1,m_t,j}$ and $C^{\mathrm{th}}_{t+1,m_t,j}
=\sqrt{\frac{\log(2(K+1)JT^2)}{2N_{t+1,m_t,j}}}$\nllabel{algline:threshold-radius-update}\;
}

\For{$(m,j)\in[K+1]\times[J]$ with $m\ne m_t$ or $u_{j-1}\ge\gamma_t$}{
Set
$N_{t+1,m,j}=N_{t,m,j}$,
$Z_{t+1,m,j}=Z_{t,m,j}$,
$\widehat F_{t+1,m,j}=\widehat F_{t,m,j}$, and
$C^{\mathrm{th}}_{t+1,m,j}=C^{\mathrm{th}}_{t,m,j}$\;
}
}
\end{algorithm2e}
\endgroup

\begin{theorem}
\label{thm:regret-upper}
Suppose
\pref{assum:monotone-demand} holds. \pref{alg:threshold-ucb} with $K=J=\lceil T^{1/3}\log^{-1/3}T\rceil$ guarantees that $\E[\Reg_T]=\order(PDT^{2/3}\log^{1/3}T)$.
\end{theorem}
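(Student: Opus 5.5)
We decompose the regret as in \pref{eq:ucb-regret-template}, except that the UCB argument is applied to the discretized revenues $R_m^{(J)}(\gamma_t)$ rather than to $R(p_m,\gamma_t)$.

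\textbf{Step 1: integration error.} For every $m$ and $\gamma\in[0,D]$ we show $|R(p_m,\gamma)-R_m^{(J)}(\gamma)|\le P\Delta$. Write $R(p_m,\gamma)=p_m\sum_{j}\int_{[u_{j-1},u_j)\cap[0,\gamma)}S_{p_m}(u)\,\dd u$. The sum $R_m^{(J)}(\gamma)$ ranges over exactly the $j$ with $u_{j-1}<\gamma$.
\begin{itemize}
\item For each full cell, $S_{p_m}$ is nonincreasing, so $\Delta S_{p_m}(u_{j-1})\ge\int_{u_{j-1}}^{u_j}S_{p_m}$.
\item The telescoping discrepancy over full cells is at most $\Delta\sum_j(S(u_{j-1})-S(u_j))\le\Delta$.
\item The single partial last cell contributes an additional error of at most $\Delta$.
\end{itemize}
Hence $|R-R^{(J)}|\le 2P\Delta$, and the exact constant does not matter. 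Consequently,
\[
\max_m R(p_m,\gamma_t)-R(p_t,\gamma_t)\le \max_m R^{(J)}_m(\gamma_t)-R^{(J)}_{m_t}(\gamma_t)+4P\Delta .
\]
Summing over $t$ gives an $\order(PDT/J)$ term.

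\textbf{Step 2: confidence event.} Fix $(m,j)$ and let $\tau_1<\tau_2<\cdots$ be the rounds with $m_s=m$ and $\gamma_s>u_{j-1}$; membership is $\calG_s$-measurable. Since $Y_s$ is independent of $\calG_s$ and has a stationary law, the indicators $\one\{X_{\tau_i}\le u_{j-1}\}=\one\{Y_{\tau_i}(p_m)\le u_{j-1}\}$ are i.i.d.\ Bernoulli$(F_{p_m}(u_{j-1}))$. This follows by the standard optional-skipping argument, or alternatively via a martingale-difference Azuma--Hoeffding bound. For each fixed count $n\le T$, apply Hoeffding's inequality with failure probability $1/((K+1)JT^2)$ (two-sided), then take a union over $n$, $m$, $j$. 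This yields $\mathcal E$ with $\Pr(\mathcal E^c)\le1/T$, on which $|\widehat F_{t,m,j}-F_{p_m}(u_{j-1})|\le C^{\mathrm{th}}_{t,m,j}$. The bound is trivial when $N=0$, since then $C^{\mathrm{th}}\ge1$. On $\mathcal E$ we obtain $R_m^{(J)}(\gamma)\le\mathsf U_t^{\mathrm{th}}(m,\gamma)\le R_m^{(J)}(\gamma)+2B_{t,m}(\gamma)$. The argument of \pref{eq:ucb-selected-excess} then bounds the discretized per-round regret by $2B_{t,m_t}(\gamma_t)$. The failure event contributes at most $PD$.

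\textbf{Step 3: summing bonuses (main step).} We need
\[
\sum_t B_{t,m_t}(\gamma_t)=P\Delta\sum_{j}\sum_{t:u_{j-1}<\gamma_t}p_{m_t}P^{-1}C^{\mathrm{th}}_{t,m_t,j}.
\]
The key observation is that every term $(t,j)$ appearing in the bonus coincides with an increment of $N_{\cdot,m_t,j}$ at round $t$. Therefore, for fixed $(m,j)$, the terms are $c\sqrt{\log(2(K+1)JT^2)/(2\max\{1,n\})}$ for $n=0,1,\dots,N_{T+1,m,j}-1$. Their sum is at most $\order(\sqrt{L\,N_{T+1,m,j}})$ with $L=\log(2(K+1)JT^2)$. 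By Cauchy--Schwarz over $m$, using $\sum_m N_{T+1,m,j}\le T$, the sum over $m$ for fixed $j$ is $\order(\sqrt{(K+1)TL})$. Summing over the $J$ thresholds and multiplying by $P\Delta=PD/J$ gives $\order(PD\sqrt{KTL})$.

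\textbf{Step 4: combine.} Collecting the terms,
\[
\E[\Reg_T]\le \frac{PDT}{K}+\order\!\left(\frac{PDT}{J}\right)+\order(PD\sqrt{KT\log(KJT)})+PD .
\]
With $K=J=\lceil T^{1/3}\log^{-1/3}T\rceil$ we have $\log(KJT)=\order(\log T)$. The first two terms are then $\order(PDT^{2/3}\log^{1/3}T)$, and the third is $PD\sqrt{T^{4/3}\log^{-1/3}T\cdot\log T}=PDT^{2/3}\log^{1/3}T$. This gives the claimed rate. The only delicate point is the adaptive-sampling concentration in Step 2, which the per-count union bound handles cleanly.
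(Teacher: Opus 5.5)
Your proposal is correct and follows essentially the same route as the paper: both reduce to the discretized revenues $R_m^{(J)}$, apply per-count Hoeffding with a union bound to the pooled threshold estimates, charge each bonus term to an increment of its shared count $N_{t,m,j}$, and finish with Cauchy--Schwarz over prices. The only difference is in constants: the paper uses the one-sided bound $0\le R_m^{(J)}(\gamma)-R(p_m,\gamma)\le p_m\Delta$, so only the selected price pays an integration error, giving $P\Delta$ per round instead of your $4P\Delta$.
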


The complete proof is deferred to \pref{app:threshold-proof}. Threshold-UCB therefore achieves $\otil(PDT^{2/3})$ expected regret by sharing observations across inventory levels. To complement this guarantee, we also establish an $\Omega(T^{2/3})$ lower bound with $P=D=1$ through a reduction to classical stochastic posted pricing: constant inventory and binary demand recover the random-valuation model of \citet{kleinberg2003value}. We defer the lower-bound argument to \pref{app:lower}. This reduction also explains why the parametric rate of \citet{xu2026dynamic} does not extend to the present class. Their $\Omega(\sqrt T)$ benchmark concerns a linear demand curve with shared additive noise, whereas in the model of \citet{kleinberg2003value}, the randomness is generated by an arbitrary valuation distribution, so the resulting binary sales noise is price-dependent. Together, these bounds establish that Threshold-UCB is minimax-optimal up to logarithmic factors.
\section{Experiments}
\label{sec:experiments}
In this section, we compare Threshold-UCB (\pref{alg:threshold-ucb}) with existing benchmarks across different demand, inventory, and noise specifications, examining both the setting covered by C20CB~\citep{xu2026dynamic} and settings that depart from its assumptions. These parameter specifications are not tailored to any method. They are chosen to cover both C20CB's reference setting and departures from it, and our conclusions are consistent across all $48$ resulting environments (see \pref{app:complete-additive-results} and \pref{app:multiplicative}).

\paragraph{Demand.} We consider four baseline demand curves $\mu(p)$ for $p\in[0,1]$: (A) the linear curve $1.4-0.8p$ with a pronounced negative slope; (B) the nearly flat linear curve $1.10-0.01p$; (C) the exponential curve $\exp(0.2-p)$; and (D) the logistic curve $[1+\exp(-(2-4p))]^{-1}$. Demand A is the reference case satisfying C20CB's linearity assumption, Demand B is linear but too flat for C20CB's inventory conditions to hold, and Demands C and D are nonlinear.

\paragraph{Inventory.} We consider three support patterns: (1) central inventories in $[0.86,1.29]$; (2) broad inventories in $[0.02,1.75]$; and (3) piecewise-changing inventories that cycle among low, central, and high regimes. Each pattern uses either a uniform distribution (\texttt{U}) or a clipped-Gaussian distribution (\texttt{G}). The piecewise processes simulate abrupt changes in inventory availability, as permitted by our adversarial inventory model. The precise distributions are given in \pref{app:additive-assumption-map}.

\paragraph{Noise.}  The additive model (\texttt{Add}) uses $Y_t(p)=\mu(p)+\varepsilon_t$, where $\varepsilon_t$ is zero-mean, price-independent uniform noise with half-width $0.10$ for the linear curves, $0.05$ for the exponential curve, and $0.03$ for the logistic curve, so that demand is always nonnegative. The multiplicative model (\texttt{Mult}) uses $Y_t(p)=\mu(p)Z_t(p)$, where the multiplier has mean $1$ but a price-dependent distribution, preserving the conditional mean while changing the noise law with price. The multiplicative distributions are specified in \pref{app:multiplicative}.

Each figure title identifies the demand curve, inventory support, inventory distribution, and noise using these codes. For example, \texttt{A-1-U-Add} denotes Demand A with central uniform inventory and additive noise. To compare performance within and beyond C20CB's assumptions, we consider six representative combinations in \pref{fig:diagnostic-tuples}. Among them, only \texttt{A-1-U-Add} satisfies both its linear-additive demand and inventory-support conditions. Starting from this, \texttt{A-2-U-Add} expands the inventory support beyond the range allowed by C20CB. In \texttt{B-1-U-Add}, the demand curve is nearly flat relative to the noise, so no inventory level can produce the censoring pattern required by C20CB (that is, demand must be fully censored at the lowest price and fully observed at the highest price). The nonlinear demand curves in \texttt{C-1-U-Add} and \texttt{C-3-G-Add} violate the linear-demand assumption, while \texttt{A-1-U-Mult} retains the reference linear demand curve but replaces the additive noise with multiplicative one. Detailed assumption checks are provided in \pref{app:additive-assumption-map} and \pref{app:multiplicative}.

\paragraph{Experimental setup.} We compare tuned Threshold-UCB with the original C20CB \citep{xu2026dynamic} that uses a fixed horizon and a variant that uses a doubling trick on the horizon when the mean demand is linear, and with KM-UCB and Naive-UCB in all environments. KM-UCB uses Kaplan--Meier to estimate the demand distribution, whereas Naive-UCB treats censored sales as demand. We omit Double-Grid-UCB, which serves only as a theoretical warm-up in \pref{sec:double-grid}.
See \pref{app:benchmark} for detailed descriptions of all these benchmark algorithms. Each run lasts $T=10000$ rounds, and regret is evaluated against an oracle on a $2001$-point price grid, much finer than the learners' price grids, which have at most $45$ points after tuning. 
 Curves average $30$ held-out paths. In each environment, each method receives $20$ hyperparameter configurations evaluated on $5$ separate tuning paths. Implementation details are given in \pref{app:experiment-details}.

\begin{figure}[t]
\centering
\begin{minipage}[t]{0.30\textwidth}
\centering
\includegraphics[width=\linewidth]{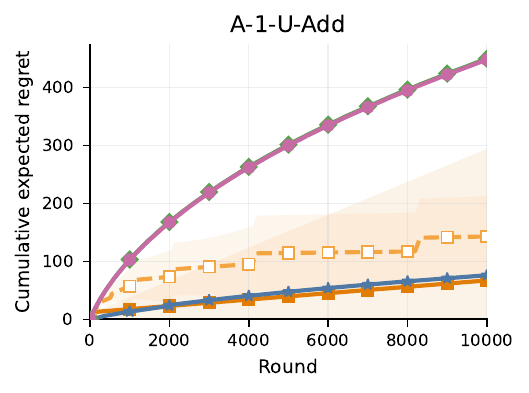}
\end{minipage}\hfill
\begin{minipage}[t]{0.30\textwidth}
\centering
\includegraphics[width=\linewidth]{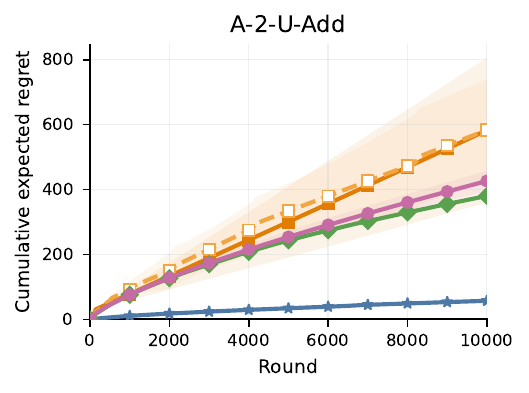}
\end{minipage}\hfill
\begin{minipage}[t]{0.30\textwidth}
\centering
\includegraphics[width=\linewidth]{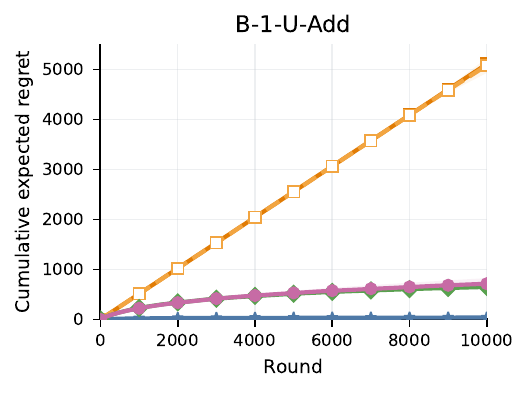}
\end{minipage}

\vspace{0.6em}
\begin{minipage}[t]{0.30\textwidth}
\centering
\includegraphics[width=\linewidth]{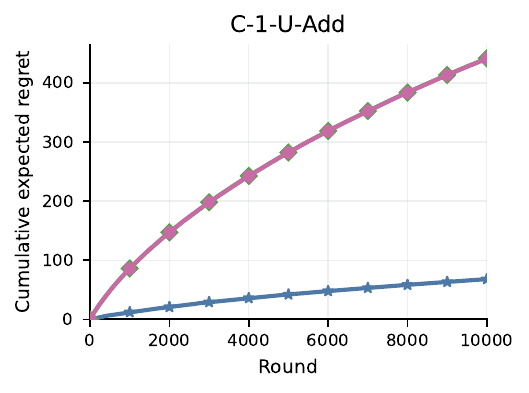}
\end{minipage}\hfill
\begin{minipage}[t]{0.30\textwidth}
\centering
\includegraphics[width=\linewidth]{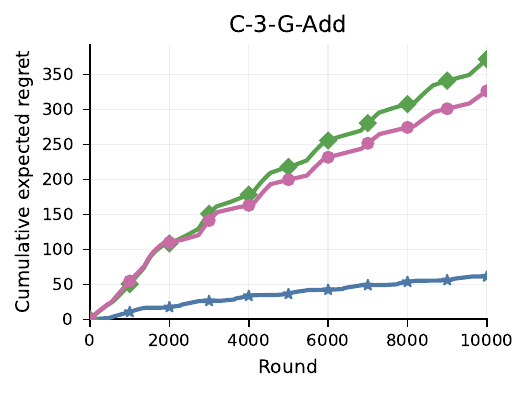}
\end{minipage}\hfill
\begin{minipage}[t]{0.30\textwidth}
\centering
\includegraphics[width=\linewidth]{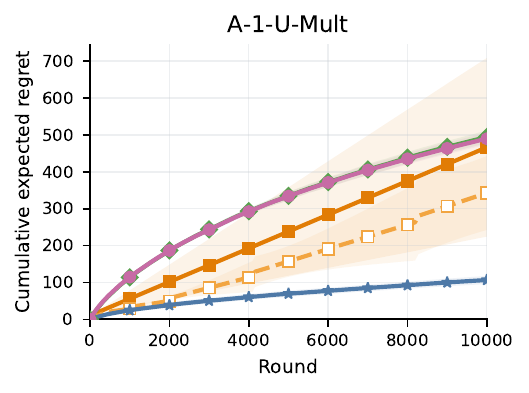}
\end{minipage}
\includegraphics[width=\linewidth]{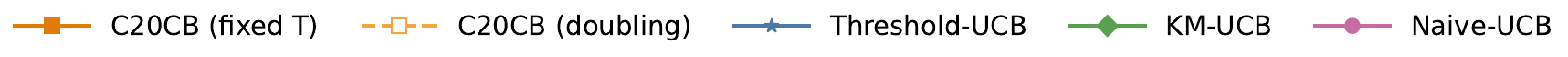}\caption{Mean cumulative expected regret over 30 held-out paths; shading indicates one sample standard deviation. Titles specify demand, inventory pattern, and noise type. Only \texttt{A-1-U-Add} matches C20CB's conditions. Broad inventory violates its support conditions, nearly flat Demand B makes the admissible inventory interval empty, and multiplicative noise violates its common additive-noise assumption. C20CB is omitted for nonlinear Demand C. Threshold-UCB achieves the lowest final regret in all panels except \texttt{A-1-U-Add}, where fixed-horizon C20CB is slightly lower.} \label{fig:diagnostic-tuples}
\end{figure}

\paragraph{Results.}
Threshold-UCB achieves the lowest or nearly lowest mean final regret among the compared methods in all six environments in \pref{fig:diagnostic-tuples}. In the matched reference environment \texttt{A-1-U-Add}, fixed-horizon C20CB performs best, while Threshold-UCB attains comparable mean regret. Thus, our nonparametric method remains competitive when C20CB's stronger structural conditions hold.

Outside this reference setting, Threshold-UCB has the lowest final regret in each displayed environment. Under linear demand and additive noise, it outperforms all benchmarks in \texttt{A-2-U-Add} and \texttt{B-1-U-Add}, where C20CB's inventory conditions fail. Under nonlinear demand where C20CB does not apply, Threshold-UCB outperforms the heuristic KM-UCB and Naive-UCB baselines in both \texttt{C-1-U-Add} and \texttt{C-3-G-Add}, including the piecewise-inventory setting. Finally, in the case of \texttt{A-1-U-Mult} with price-dependent multiplicative noise, Threshold-UCB again achieves the lowest mean final regret, while both C20CB variants incur substantially higher regret. Additional results across the remaining demand, inventory, and noise combinations are reported in \pref{app:experiment-details}, \pref{app:complete-additive-results}, and \pref{app:multiplicative}. Across all $48$ environments, Threshold-UCB attains the lowest mean final regret in $46$, the exceptions being \texttt{A-1-U-Add} and \texttt{A-1-G-Add}, where C20CB's conditions hold.
\bibliographystyle{plainnat}
\bibliography{references}

@article{xu2021logarithmic,
  title={Logarithmic regret in feature-based dynamic pricing},
  author={Xu, Jianyu and Wang, Yu-Xiang},
  journal={Advances in Neural Information Processing Systems},
  volume={34},
  pages={13898--13910},
  year={2021}
}

@article{luo2022contextual,
  title={Contextual dynamic pricing with unknown noise: Explore-then-ucb strategy and improved regrets},
  author={Luo, Yiyun and Sun, Will Wei and Liu, Yufeng},
  journal={Advances in Neural Information Processing Systems},
  volume={35},
  pages={37445--37457},
  year={2022}
}

@inproceedings{xu2022towards,
  title={Towards agnostic feature-based dynamic pricing: Linear policies vs linear valuation with unknown noise},
  author={Xu, Jianyu and Wang, Yu-Xiang},
  booktitle={International Conference on Artificial Intelligence and Statistics},
  pages={9643--9662},
  year={2022},
  organization={PMLR}
}

@article{javanmard2019dynamic,
  title={Dynamic pricing in high-dimensions},
  author={Javanmard, Adel and Nazerzadeh, Hamid},
  journal={Journal of Machine Learning Research},
  volume={20},
  number={9},
  pages={1--49},
  year={2019}
}

@article{kaplan1958nonparametric,
  title={Nonparametric estimation from incomplete observations},
  author={Kaplan, Edward L and Meier, Paul},
  journal={Journal of the American statistical association},
  volume={53},
  number={282},
  pages={457--481},
  year={1958},
  publisher={Taylor \& Francis}
}

@article{hoeffding1963probability,
  author  = {Hoeffding, Wassily},
  title   = {Probability Inequalities for Sums of Bounded Random Variables},
  journal = {Journal of the American Statistical Association},
  volume  = {58},
  number  = {301},
  pages   = {13--30},
  year    = {1963}
}

@article{gallego1994optimal,
  author  = {Gallego, Guillermo and van Ryzin, Garrett},
  title   = {Optimal Dynamic Pricing of Inventories with Stochastic Demand over Finite Horizons},
  journal = {Management Science},
  volume  = {40},
  number  = {8},
  pages   = {999--1020},
  year    = {1994},
  doi     = {10.1287/mnsc.40.8.999}
}

@inproceedings{kleinberg2003value,
  author    = {Kleinberg, Robert D. and Leighton, F. Thomson},
  title     = {The Value of Knowing a Demand Curve: Bounds on Regret for Online Posted-Price Auctions},
  booktitle = {Proceedings of the 44th Annual IEEE Symposium on Foundations of Computer Science},
  pages     = {594--605},
  year      = {2003},
  doi       = {10.1109/SFCS.2003.1238232}
}

@article{besbes2009dynamic,
  author  = {Besbes, Omar and Zeevi, Assaf},
  title   = {Dynamic Pricing Without Knowing the Demand Function: Risk Bounds and Near-Optimal Algorithms},
  journal = {Operations Research},
  volume  = {57},
  number  = {6},
  pages   = {1407--1420},
  year    = {2009},
  doi     = {10.1287/opre.1080.0640}
}

@article{broder2012dynamic,
  author  = {Broder, Josef and Rusmevichientong, Paat},
  title   = {Dynamic Pricing Under a General Parametric Choice Model},
  journal = {Operations Research},
  volume  = {60},
  number  = {4},
  pages   = {965--980},
  year    = {2012},
  doi     = {10.1287/opre.1120.1057}
}

@article{keskin2014dynamic,
  author  = {Keskin, N. Bora and Zeevi, Assaf},
  title   = {Dynamic Pricing with an Unknown Demand Model: Asymptotically Optimal Semi-Myopic Policies},
  journal = {Operations Research},
  volume  = {62},
  number  = {5},
  pages   = {1142--1167},
  year    = {2014},
  doi     = {10.1287/opre.2014.1294}
}

@article{huh2011adaptive,
  author  = {Huh, Woonghee Tim and Levi, Retsef and Rusmevichientong, Paat and Orlin, James B.},
  title   = {Adaptive Data-Driven Inventory Control with Censored Demand Based on Kaplan--Meier Estimator},
  journal = {Operations Research},
  volume  = {59},
  number  = {4},
  pages   = {929--941},
  year    = {2011},
  doi     = {10.1287/opre.1100.0906}
}

@article{bisi2011censored,
  author  = {Bisi, Arnab and Dada, Maqbool and Tokdar, Surya},
  title   = {A Censored-Data Multiperiod Inventory Problem with Newsvendor Demand Distributions},
  journal = {Manufacturing \& Service Operations Management},
  volume  = {13},
  number  = {4},
  pages   = {525--533},
  year    = {2011},
  doi     = {10.1287/msom.1110.0340}
}

@article{xu2026dynamic,
  author  = {Xu, Jianyu and Wang, Yining and Chen, Xi and Wang, Yu-Xiang},
  title   = {Dynamic Pricing with Adversarially-Censored Demands},
  journal = {arXiv preprint arXiv:2502.06168v2},
  year    = {2026},
  doi     = {10.48550/arXiv.2502.06168}
}

@article{bitouze1999dkm,
  author  = {Bitouz{\'e}, Denis and Laurent, B{\'e}atrice and Massart, Pascal},
  title   = {A Dvoretzky--Kiefer--Wolfowitz Type Inequality for the Kaplan--Meier Estimator},
  journal = {Annales de l'Institut Henri Poincar{\'e}, Probabilit{\'e}s et Statistiques},
  volume  = {35},
  number  = {6},
  pages   = {735--763},
  year    = {1999}
}

@article{chen2010bounds,
  author  = {Chen, Li},
  title   = {Bounds and Heuristics for Optimal {Bayesian} Inventory Control with Unobserved Lost Sales},
  journal = {Operations Research},
  volume  = {58},
  number  = {2},
  pages   = {396--413},
  year    = {2010},
  doi     = {10.1287/opre.1090.0726}
}

@article{besbes2013implications,
  author  = {Besbes, Omar and Muharremoglu, Alp},
  title   = {On Implications of Demand Censoring in the Newsvendor Problem},
  journal = {Management Science},
  volume  = {59},
  number  = {6},
  pages   = {1407--1424},
  year    = {2013},
  doi     = {10.1287/mnsc.1120.1654}
}

@article{huh2009adaptive,
  author  = {Huh, Woonghee Tim and Janakiraman, Ganesh and Muckstadt, John A. and Rusmevichientong, Paat},
  title   = {An Adaptive Algorithm for Finding the Optimal Base-Stock Policy in Lost Sales Inventory Systems with Censored Demand},
  journal = {Mathematics of Operations Research},
  volume  = {34},
  number  = {2},
  pages   = {397--416},
  year    = {2009},
  doi     = {10.1287/moor.1080.0367}
}

@article{chen2024optimal,
  author  = {Chen, Boxiao and Wang, Yining and Zhou, Yuan},
  title   = {Optimal Policies for Dynamic Pricing and Inventory Control with Nonparametric Censored Demands},
  journal = {Management Science},
  volume  = {70},
  number  = {5},
  pages   = {3362--3380},
  year    = {2024},
  doi     = {10.1287/mnsc.2023.4859}
}

@article{chen2021nonparametric,
  author  = {Chen, Boxiao and Chao, Xiuli and Shi, Cong},
  title   = {Nonparametric Learning Algorithms for Joint Pricing and Inventory Control with Lost Sales and Censored Demand},
  journal = {Mathematics of Operations Research},
  volume  = {46},
  number  = {2},
  pages   = {726--756},
  year    = {2021},
  doi     = {10.1287/moor.2020.1084}
}

@article{chen2020databased,
  author  = {Chen, Boxiao and Chao, Xiuli and Wang, Yining},
  title   = {Data-Based Dynamic Pricing and Inventory Control with Censored Demand and Limited Price Changes},
  journal = {Operations Research},
  volume  = {68},
  number  = {5},
  pages   = {1445--1456},
  year    = {2020},
  doi     = {10.1287/opre.2020.1993}
}

\clearpage
\appendix
\raggedbottom
\section{Detailed Comparison with~\citet{xu2026dynamic}}\label{app:xu-comparison}
In this section, we elaborate on the discussion in \pref{sec:comparison} and show that the conditions in \pref{eq:linear-inventory-support} not only are restrictive but also greatly simplify learning: the known strict gap yields a high-price
interval on which demand is never censored.  Specifically, define
$
\epsilon
\triangleq\min\left\{\frac{P}{2},
\frac{\gamma_{\min}-a_{\max}+b_{\min}P-c}{2b_{\max}}\right\}>0.
$
Then, for every $q\in[P-\epsilon,P]$, we have
$
Y_t(q)
\le a_{\max}-b_{\min}P+c+b_{\max}\epsilon
<\gamma_{\min}\le\gamma_t.
$
In particular, $P$ and $P-\epsilon$ are admissible prices at which sales
reveal demand exactly for every admissible inventory sequence.  Since
$\E[N_t]=0$, we know that
$
b
=\frac{\E\left[Y_t(P-\epsilon)\right]
-\E\left[Y_t(P)\right]}{\epsilon}$ and $
a=\E\left[Y_t(P)\right]+bP$.
Repeated observations at these two fixed prices therefore allow $a$ and $b$
to be estimated by sample means, without using censored observations.
Moreover, for either price $q$, $Y_t(q)-\E[Y_t(q)]=N_t$; hence, centering the
uncensored observations also allows us to estimate the common noise
distribution, with the centering error controlled by the sample-mean error.

In addition, the same structure supports direct revenue estimation across prices and
inventory levels.  An uncensored observation collected at $P$ satisfies
$X_t=a-bP+N_t$.  For any candidate $p\in[0,P]$ and inventory
$\gamma$, we have the identity
\begin{align*}
p\min\{X_t+b(P-p),\gamma\}
&=p\min\{a-bp+N_t,\gamma\},
\end{align*}
which shows that the transformed observation has expectation $R(p,\gamma)$.
Replacing $b$ by an estimate $\widehat b$ changes the transformed revenue by
at most $p(P-p)|\widehat b-b|$.  Combining this error bound with a confidence
band for the uncensored demand distribution gives a direct revenue-UCB
construction. Further combining uniform derivative estimates with the local curvature of the revenue function yields the $\otil(\sqrt T)$ expected regret of C20CB.
Thus, these support conditions make both parameter estimation and
distribution learning accessible through uncensored exploration. Our model, by contrast, assumes only monotonicity of expected sales.
It allows nonlinear, heteroskedastic, and price-dependent demand laws.
The $\Omega(T^{2/3})$ lower bound in
\pref{app:lower} establishes the statistical cost of this broader class, even
when inventory is constant and censoring disappears.
\section{Omitted Details in \pref{sec:algorithm}}
\label{app:upper-proof}
In this section, we provide the omitted details for \pref{sec:algorithm}. First, we prove the $\otil(T^{3/4})$ expected regret guarantee for Double-Grid-UCB presented in \pref{sec:double-grid}. We
then prove the two lemmas used in the Threshold-UCB analysis and combine them to obtain its $\otil(T^{2/3})$ expected regret guarantee in \pref{sec:threshold-construction}.

\subsection{Omitted Details in \pref{sec:double-grid}}
\label{app:double-grid-proof}

In this section, we provide the omitted details for \pref{sec:double-grid}. We first give the complete pseudocode for Double-Grid-UCB in \pref{alg:double-grid-ucb} and then prove \pref{thm:double-grid-upper}.

\subsubsection{Algorithm Description of Double-Grid-UCB}

\begin{algorithm2e}[H]
\caption{Double-Grid-UCB}
\label{alg:double-grid-ucb}
\KwIn{Price bound $P$, inventory bound $D$, integers $K,J\ge1$, and horizon $T$}
Set $p_m\triangleq(m-1)P/K$ for $m\in[K+1]$, $\Delta\triangleq D/J$, and
$u_j\triangleq j\Delta$ for $j\in\{0,\ldots,J\}$\;

For all $(m,\ell)\in[K+1]\times[J]$, initialize
$N^{\mathrm{dg}}_{1,m,\ell}=Q^{\mathrm{dg}}_{1,m,\ell}
=\widehat R^{\mathrm{dg}}_{1,m,\ell}=0$ and
$C^{\mathrm{dg}}_{1,m,\ell}
=PD\sqrt{\log(2(K+1)JT^2)/2}$\;

\For{$t=1,\ldots,T$}{
Observe $\gamma_t$ and set
$\ell_t\triangleq\min\{J,1+\lfloor\gamma_t/\Delta\rfloor\}$\;

Choose
$m_t\in\arg\max_{m\in[K+1]}
\{\widehat R^{\mathrm{dg}}_{t,m,\ell_t}
+C^{\mathrm{dg}}_{t,m,\ell_t}\}$\;

Post $p_t=p_{m_t}$ and observe $X_t$\;

Set
$N^{\mathrm{dg}}_{t+1,m_t,\ell_t}
=N^{\mathrm{dg}}_{t,m_t,\ell_t}+1$
and
$Q^{\mathrm{dg}}_{t+1,m_t,\ell_t}
=Q^{\mathrm{dg}}_{t,m_t,\ell_t}
+p_t\min\{X_t,u_{\ell_t-1}\}$\;

Set
$\widehat R^{\mathrm{dg}}_{t+1,m_t,\ell_t}
=Q^{\mathrm{dg}}_{t+1,m_t,\ell_t}/N^{\mathrm{dg}}_{t+1,m_t,\ell_t}$
and
$C^{\mathrm{dg}}_{t+1,m_t,\ell_t}
=PD\sqrt{\frac{\log(2(K+1)JT^2)}{2N^{\mathrm{dg}}_{t+1,m_t,\ell_t}}}$\;

\For{$(m,\ell)\in[K+1]\times[J]$ with $(m,\ell)\ne(m_t,\ell_t)$}{
set
$N^{\mathrm{dg}}_{t+1,m,\ell}=N^{\mathrm{dg}}_{t,m,\ell}$,
$Q^{\mathrm{dg}}_{t+1,m,\ell}=Q^{\mathrm{dg}}_{t,m,\ell}$,
$\widehat R^{\mathrm{dg}}_{t+1,m,\ell}
=\widehat R^{\mathrm{dg}}_{t,m,\ell}$, and
$C^{\mathrm{dg}}_{t+1,m,\ell}=C^{\mathrm{dg}}_{t,m,\ell}$\;
}
}
\end{algorithm2e}

\subsubsection{Proof of \pref{thm:double-grid-upper}}

\begin{proof}

By the reduction in \pref{eq:regret-decomposition} and the inventory-rounding bound in \pref{eq:double-grid-rounded-regret}, we have
\begin{align}
\Reg_T
&\le \frac{PDT}{K}+\Reg_T^{\calP_K}
\tag{\pref{eq:regret-decomposition}}\\
&\le \frac{PDT}{K}+\frac{PDT}{J}
+\sum_{t=1}^T\left[\max_{m\in[K+1]}R(p_m,u_{\ell_t-1})-R(p_t,u_{\ell_t-1})\right].
\label{eq:double-grid-inventory-error}
\end{align}
It therefore suffices to bound regret at the lower inventory representatives. We use the confidence event from \pref{eq:double-grid-confidence-main}: denote by $\mathcal E^{\mathrm{dg}}$ the event that
\begin{equation}
\left|\widehat R^{\mathrm{dg}}_{t,m,\ell}-R(p_m,u_{\ell-1})\right|
\le C^{\mathrm{dg}}_{t,m,\ell}
\label{eq:appendix-double-grid-confidence}
\end{equation}
holds for every $t\in[T+1]$, $m\in[K+1]$, and $\ell\in[J]$. \pref{lem:double-grid-confidence} shows that $\mathcal E^{\mathrm{dg}}$ holds with probability at least $1-\frac{1}{T}$.

Suppose that $\mathcal E^{\mathrm{dg}}$ holds. Applying the definition of $m_t$, we know that
\begin{align}
&\max_{m\in[K+1]}R(p_m,u_{\ell_t-1})-R(p_t,u_{\ell_t-1})\nonumber\\
&\leq \max_{m\in[K+1]}\mathsf U_t^{\mathrm{dg}}(m ,u_{\ell_t-1})-R(p_t,u_{\ell_t-1})
\tag{as $\mathcal{E}^{\mathrm{dg}}$ holds}\\
&\le \mathsf U_t^{\mathrm{dg}}(m_t,u_{\ell_t-1})-R(p_t,u_{\ell_t-1})
\tag{by definition of $m_t$}\\
&\le 2C^{\mathrm{dg}}_{t,m_t,\ell_t}.
\label{eq:double-grid-selected-width}
\end{align}
The last inequality follows from \pref{eq:appendix-double-grid-confidence}. Since $0\le\Reg_T\le PDT$ on every sample path, we obtain
\begin{align*}
\E[\Reg_T]
&=\E\left[\Reg_T\one\{\mathcal E^{\mathrm{dg}}\}\right]
+\E\left[\Reg_T\one\{(\mathcal E^{\mathrm{dg}})^c\}\right]\\
&\le \frac{PDT}{K}+\frac{PDT}{J}
+2\E\left[\sum_{t=1}^T C^{\mathrm{dg}}_{t,m_t,\ell_t}\right]+PD.
\tag{\pref{lem:double-grid-confidence}}
\end{align*}
Thus, it remains to bound the third term. For each pair $(m,\ell)$, the counts before its successive visits are $0,\ldots,N^{\mathrm{dg}}_{T+1,m,\ell}-1$. Also, $\sum_{r=0}^{n-1}(\max\{1,r\})^{-1/2}\le2\sqrt n$ for every integer $n\ge1$, with an empty sum when $n=0$. For notational convenience, let $L\triangleq \log(2(K+1)JT^2)$. Therefore, on every sample path, we know that
\begin{align}
\sum_{t=1}^T C^{\mathrm{dg}}_{t,m_t,\ell_t}&=PD\sqrt{\frac L2}
\sum_{m=1}^{K+1}\sum_{\ell=1}^J
\sum_{r=0}^{N^{\mathrm{dg}}_{T+1,m,\ell}-1}
\frac{1}{\sqrt{\max\{1,r\}}}
\nonumber\\
&\le3PD\sqrt L\sum_{m=1}^{K+1}\sum_{\ell=1}^J
\sqrt{N^{\mathrm{dg}}_{T+1,m,\ell}}
\nonumber\\
&\le3PD\sqrt{L(K+1)J\sum_{m=1}^{K+1}\sum_{\ell=1}^J N^{\mathrm{dg}}_{T+1,m,\ell}}
\tag{Cauchy-Schwarz inequality}\\
&=3PD\sqrt{(K+1)JTL}.
\label{eq:double-grid-radius-sum}
\end{align}
The last equality uses $\sum_{m,\ell}N^{\mathrm{dg}}_{T+1,m,\ell}=T$, since each round updates exactly one price-bin pair. Substituting this bound gives
\begin{equation}
\E[\Reg_T]\le\frac{PDT}{K}+\frac{PDT}{J}
+6PD\sqrt{(K+1)JTL}+PD.
\label{eq:double-grid-upper}
\end{equation}
Choosing $K=J=\lceil T^{1/4}\log^{-1/4}T\rceil$ for $T\ge2$ yields $\E[\Reg_T]=\order(PDT^{3/4}\log^{1/4}T)$.
\end{proof}

\begin{lemma}\label{lem:double-grid-confidence}
The sequence generated by \pref{alg:double-grid-ucb} satisfies $\Prob(\mathcal E^{\mathrm{dg}})\ge1-1/T$.
\end{lemma}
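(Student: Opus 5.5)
The plan is to prove a time-uniform Hoeffding bound for each fixed pair $(m,\ell)$, using the standard ``stack of rewards'' (skeleton) argument for adaptively sampled arms, and then take a union bound over pairs and sample counts. First I need the key structural fact about the observations. If $(m_t,\ell_t)=(m,\ell)$, the clipped revenue $p_m\min\{X_t,u_{\ell-1}\}$ equals $p_m\min\{Y_t(p_m),u_{\ell-1}\}$, because $u_{\ell_t-1}\le\gamma_t$. This value lies in $[0,PD]$. Since $Y_t(\cdot)$ is independent of $\calG_t$ and has the same law in every round, the conditional mean given $\calG_t$ is exactly $R(p_m,u_{\ell-1})$.

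For a fixed pair, I will define the $k$-th retained observation $V_k^{m,\ell}$ as the clipped revenue at the $k$-th round $\tau_k$ with $(m_t,\ell_t)=(m,\ell)$. Each $\tau_k$ is a stopping time with respect to $\calG_t$, so $\{\tau_k=t\}\in\calG_t$. Conditioning on $\calG_{\tau_k}$ and using the independence of $Y_{\tau_k}$ from it, the centered variables $V_k-R(p_m,u_{\ell-1})$ form a bounded martingale difference sequence with respect to the filtration $\calF_{\tau_k}$. If this feels delicate, a cleaner alternative is an equivalent construction: pre-draw an i.i.d. sequence $\{V_k^{m,\ell}\}_{k\ge1}$ for each pair and let the $k$-th visit reveal $V_k$. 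The joint law of the algorithm's trajectory is unchanged, and the retained samples become i.i.d. The subtlety is that $Y_t(\cdot)$ is shared across pairs within a round; this is harmless because only one pair is updated per round and feedback is only at the posted price.

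With this in hand, for each fixed $n\in[T]$, Hoeffding's inequality (or Azuma–Hoeffding in the martingale view) gives
\[
\Prob\Bigl(\bigl|\tfrac1n\textstyle\sum_{k=1}^nV_k-R(p_m,u_{\ell-1})\bigr|>PD\sqrt{L/(2n)}\Bigr)\le 2e^{-L}=\frac{1}{(K+1)JT^2},
\]
where $L=\log(2(K+1)JT^2)$. On the event that all these deviation bounds hold for $n\le T$, the estimate $\widehat R^{\mathrm{dg}}_{t,m,\ell}$ is the $n$-sample mean with $n=N^{\mathrm{dg}}_{t,m,\ell}$, so \pref{eq:appendix-double-grid-confidence} holds whenever $N^{\mathrm{dg}}_{t,m,\ell}\ge1$. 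When $N^{\mathrm{dg}}_{t,m,\ell}=0$, the estimate is $0$, and $C^{\mathrm{dg}}=PD\sqrt{L/2}\ge PD$ because $L\ge\log 8>2$. Since $R\in[0,PD]$, the bound holds trivially in that case.

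A union bound over $(K+1)J$ pairs and $T$ values of $n$ gives a failure probability of at most $(K+1)J\cdot T\cdot\frac{1}{(K+1)JT^2}=1/T$. This covers all $t\in[T+1]$, since counts never exceed $T$. The main obstacle is justifying rigorously that adaptive selection of the pair does not bias the retained samples. I would handle it with the pre-drawn skeleton argument, checking that the optional skipping theorem applies because visits are decided before demand is realized.
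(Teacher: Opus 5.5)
Your proposal is correct and follows essentially the same route as the paper. Because $u_{\ell-1}\le\gamma_t$, clipping removes censoring, and because the pair is selected before demand is realized, the retained clipped revenues act as i.i.d.\ $[0,PD]$-valued samples with mean $R(p_m,u_{\ell-1})$. Hoeffding for each fixed $n$, the initialization at zero count, and a union bound over the $(K+1)J$ pairs and $T$ counts then give $1/T$. Your stopping-time/martingale justification and your explicit check $\sqrt{L/2}\ge1$ only spell out what the paper asserts briefly.

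One small fix for the skeleton variant: pre-draw the demands $Y^{m,\ell}_k(p_m)$ rather than only the clipped revenues $V_k$. Later inventories and prices depend on the full sale $X_t$, not just on $V_k$, so revealing only $V_k$ would not preserve the trajectory's law.
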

\begin{proof}
Fix $(m,\ell)\in[K+1]\times[J]$, and let $\tau_1<\tau_2<\cdots$ denote the successive rounds at which $(m_t,\ell_t)=(m,\ell)$. On such a round, $u_{\ell-1}\le\gamma_t$, and hence $ p_t\min\{X_t,u_{\ell-1}\} =p_m\min\{Y_t(p_m),u_{\ell-1}\}. $ Moreover, the event $\{(m_t,\ell_t)=(m,\ell)\}$ is determined before the current demand is realized. Since $Y_t(\cdot)$ is independent of $\mathcal G_t$ and has the same law in every period, the variables $ W_r\triangleq p_m\min\{Y_{\tau_r}(p_m),u_{\ell-1}\}, r\ge1,$ are i.i.d., take values in \([0,PD]\), and have mean \(R(p_m,u_{\ell-1})\).

For each fixed $n\in[T]$, Hoeffding's inequality (\pref{lem:hoeffding}) gives
\begin{align*}
&\Prob\left(\left|\frac1n\sum_{r=1}^n W_r-R(p_m,u_{\ell-1})\right|
>PD\sqrt{\frac{\log(2(K+1)JT^2)}{2n}}\right)
\le\frac{1}{(K+1)JT^2}.
\end{align*}
At zero count, the confidence bound holds by initialization.
Taking a union bound over all pairs $(m,\ell)$ and positive sample counts $n\in[T]$ yields
\begin{align*}
\Prob\bigl(\mathcal E^{\mathrm{dg}}\text{~does not hold}\bigr)
&\le\sum_{m=1}^{K+1}\sum_{\ell=1}^J\sum_{n=1}^T
\frac{1}{(K+1)JT^2}
=\frac{1}{T}.
\end{align*}
\end{proof}

\subsection{Omitted Details in \pref{sec:threshold-construction}}
\label{app:threshold-proof}
 
In this section, we provide the omitted details for \pref{sec:threshold-construction}. The pseudocode of Threshold-UCB is given in \pref{alg:threshold-ucb}. We first prove \pref{thm:regret-upper} using three supporting lemmas, and then prove these lemmas. Throughout, recall that $\Delta=D/J$, $u_j=j\Delta$ for $j\in\{0,\ldots,J\}$, and
\begin{align*}
\mathsf U_t^{\mathrm{th}}(m,\gamma)&=p_m\Delta\sum_{j\in[J]:u_{j-1}<\gamma}\left(1-\widehat F_{t,m,j}\right)+B_{t,m}(\gamma),\\
B_{t,m}(\gamma)&=p_m\Delta\sum_{j\in[J]:u_{j-1}<\gamma}C^{\mathrm{th}}_{t,m,j},
\end{align*}
with width $w_t(m,\gamma)=2B_{t,m}(\gamma)$. For notational convenience, let $L\triangleq\log(2(K+1)JT^2)$.
 
\subsubsection{Proof of \pref{thm:regret-upper}}
 
\begin{proof}
Unlike Double-Grid-UCB, Threshold-UCB constructs its indices for the discretized revenues $R_m^{(J)}$ in \pref{eq:threshold-discrete-revenue} instead of rounding the inventory. By the reduction in \pref{eq:regret-decomposition} and the integration-error bound in \pref{eq:threshold-integration-error} of \pref{lem:main-ucb-validity}, we have
\begin{align}
\Reg_T
&\le\frac{PDT}{K}+\Reg_T^{\calP_K}
\tag{\pref{eq:regret-decomposition}}\\
&\le\frac{PDT}{K}+\frac{PDT}{J}
+\sum_{t=1}^T\left[\max_{m\in[K+1]}R_m^{(J)}(\gamma_t)-R_{m_t}^{(J)}(\gamma_t)\right].
\label{eq:threshold-rounded-regret}
\end{align}
Here, only one integration error is paid in each round: every grid revenue satisfies $R(p_m,\gamma_t)\le R_m^{(J)}(\gamma_t)$, while the selected revenue satisfies $R(p_t,\gamma_t)\ge R_{m_t}^{(J)}(\gamma_t)-P\Delta$, and $TP\Delta=PDT/J$. It therefore suffices to bound regret with respect to the discretized revenues. Denote by $\mathcal E^{\mathrm{th}}$ the event that
\begin{equation}
\left|\widehat F_{t,m,j}-F_{p_m}(u_{j-1})\right|
\le C^{\mathrm{th}}_{t,m,j}
\label{eq:threshold-confidence}
\end{equation}
holds for every $t\in[T+1]$, $m\in[K+1]$, and $j\in[J]$. \pref{lem:threshold-confidence} shows that $\mathcal E^{\mathrm{th}}$ holds with probability at least $1-\frac1T$.
 
Suppose that $\mathcal E^{\mathrm{th}}$ holds. Applying the definition of $m_t$, we know that
\begin{align}
&\max_{m\in[K+1]}R_m^{(J)}(\gamma_t)-R_{m_t}^{(J)}(\gamma_t)\nonumber\\
&\le\max_{m\in[K+1]}\mathsf U_t^{\mathrm{th}}(m,\gamma_t)-R_{m_t}^{(J)}(\gamma_t)
\tag{as $\mathcal E^{\mathrm{th}}$ holds, by \pref{lem:main-ucb-validity}}\\
&\le\mathsf U_t^{\mathrm{th}}(m_t,\gamma_t)-R_{m_t}^{(J)}(\gamma_t)
\tag{by definition of $m_t$}\\
&\le w_t(m_t,\gamma_t),
\label{eq:threshold-selected-excess}
\end{align}
where the last inequality follows from \pref{eq:threshold-selected-width} in \pref{lem:main-ucb-validity}. Since $0\le\Reg_T\le PDT$ on every sample path, we obtain
\begin{align*}
\E[\Reg_T]
&=\E\left[\Reg_T\one\{\mathcal E^{\mathrm{th}}\}\right]
+\E\left[\Reg_T\one\{(\mathcal E^{\mathrm{th}})^c\}\right]\\
&\le\frac{PDT}{K}+\frac{PDT}{J}
+\E\left[\sum_{t=1}^T w_t(m_t,\gamma_t)\right]+PD.
\tag{\pref{lem:threshold-confidence}}
\end{align*}
Thus, it remains to bound the third term. Since $w_t(m_t,\gamma_t)=2B_{t,m_t}(\gamma_t)$, \pref{lem:main-bonus-control} shows that, on every sample path,
\[
\sum_{t=1}^T w_t(m_t,\gamma_t)=2\sum_{t=1}^T B_{t,m_t}(\gamma_t)\le6PD\sqrt{(K+1)TL}.
\]
Substituting this bound gives
\begin{equation}
\E[\Reg_T]\le\frac{PDT}{K}+\frac{PDT}{J}
+6PD\sqrt{(K+1)TL}+PD.
\label{eq:regret-upper}
\end{equation}
Choosing $K=J=\lceil T^{1/3}\log^{-1/3}T\rceil$ for $T\ge2$ yields $\E[\Reg_T]=\order(PDT^{2/3}\log^{1/3}T)$.
\end{proof}
 
We now prove the three supporting lemmas. The first shows that the confidence event $\mathcal E^{\mathrm{th}}$ holds with high probability.
 
\begin{lemma}\label{lem:threshold-confidence}
The sequence generated by \pref{alg:threshold-ucb} satisfies $\Prob(\mathcal E^{\mathrm{th}})\ge1-1/T$.
\end{lemma}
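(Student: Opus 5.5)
We follow the proof of \pref{lem:double-grid-confidence}, replacing the clipped revenues by threshold indicators and using the optional-skipping device to handle adaptive sampling. Fix a pair $(m,j)\in[K+1]\times[J]$. Let $\tau_1<\tau_2<\cdots$ be the successive rounds $t$ with $m_t=m$ and $\gamma_t>u_{j-1}$, which are exactly the rounds at which Line~\ref{algline:threshold-count-N} increments $N_{\cdot,m,j}$. On such a round, $u_{j-1}<\gamma_t$, so
\[
\one\{X_t\le u_{j-1}\}=\one\{\min\{Y_t(p_m),\gamma_t\}\le u_{j-1}\}=\one\{Y_t(p_m)\le u_{j-1}\}.
\]
Hence $Z_{t,m,j}$ is the sum of the uncensored indicators $V_r\triangleq\one\{Y_{\tau_r}(p_m)\le u_{j-1}\}$ over $r\le N_{t,m,j}$.

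The key step is to show that $V_1,V_2,\ldots$ are i.i.d.\ Bernoulli$(F_{p_m}(u_{j-1}))$. First, the event $\{m_t=m,\gamma_t>u_{j-1}\}$ is $\calG_t$-measurable. Second, $Y_t(\cdot)$ is independent of $\calG_t$ and has a stationary law. Together these give the result by the standard optional-skipping argument. For indices $r_1<\cdots<r_k$ and bits $b_1,\ldots,b_k$, we condition on $\calG_{\tau_k}$ on each event $\{\tau_k=t\}\in\calG_t$. This peels off the last indicator with conditional probability $F_{p_m}(u_{j-1})^{b_k}(1-F_{p_m}(u_{j-1}))^{1-b_k}$, because the earlier $V_r$ and the stopping event are $\calG_t$-measurable. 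Induction then gives the product law. This is the step I expect to require the most care, since it is exactly where the Kaplan--Meier argument fails. The point to stress is that selection into the $(m,j)$ sample depends only on pre-demand information, so adaptivity of $\gamma_t$ and $p_t$ does not bias the retained indicators. Only the stopping/selection rule is adaptive, not the observation content.

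Given this, for each fixed $n\in[T]$, Hoeffding's inequality (\pref{lem:hoeffding}) applied to $[0,1]$-valued variables gives
\[
\Prob\left(\left|\frac1n\sum_{r=1}^nV_r-F_{p_m}(u_{j-1})\right|>\sqrt{\frac{L}{2n}}\right)\le 2e^{-L}=\frac{1}{(K+1)JT^2}.
\]
When $N_{t,m,j}=n\ge1$, we have $\widehat F_{t,m,j}=\frac1n\sum_{r\le n}V_r$ and $C^{\mathrm{th}}_{t,m,j}=\sqrt{L/(2n)}$, so this event controls \pref{eq:threshold-confidence} at every $t\in[T+1]$ with that count. Since $N_{t,m,j}\le t-1\le T$, the counts range over $\{0,\ldots,T\}$. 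At count zero, $\widehat F=0$ and $F_{p_m}(u_{j-1})\le1\le\sqrt{L/2}=C^{\mathrm{th}}_{1,m,j}$, because $L\ge\log 16>2$; so the bound holds deterministically. Finally, a union bound over $(m,j)$ and $n\in[T]$ gives failure probability at most $(K+1)JT\cdot\frac{1}{(K+1)JT^2}=1/T$, as claimed.
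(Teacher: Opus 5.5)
Your proof is correct and follows the paper's route. It identifies the retained indicators with uncensored ones via $u_{j-1}<\gamma_t$, and uses $\calG_t$-measurability of the selection event to get an i.i.d.\ Bernoulli$(F_{p_m}(u_{j-1}))$ sequence. It then applies Hoeffding at each fixed count, handles count zero via $\sqrt{L/2}\ge1$, and takes a union bound over $(m,j,n)$.

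Your optional-skipping induction spells out a step the paper asserts in one line. To make it airtight, pad the sequence with independent Bernoulli draws after the last retained round, since $\tau_k$ need not occur within the horizon.
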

\begin{proof}
Fix $(m,j)\in[K+1]\times[J]$. Whenever $m_t=m$ and $\gamma_t>u_{j-1}$, the observable indicator satisfies $\one\{X_t\le u_{j-1}\}=\one\{Y_t(p_m)\le u_{j-1}\}$. Moreover, the event $\{m_t=m,\gamma_t>u_{j-1}\}$ is determined before the current demand is realized. Since $Y_t(\cdot)$ is independent of $\calG_t$ and has the same law in every period, the same argument as in the proof of \pref{lem:double-grid-confidence} shows that the retained indicators are consecutive entries $W_1,W_2,\ldots$ of an i.i.d. Bernoulli sequence with mean $F_{p_m}(u_{j-1})$.
 
For each fixed $n\in[T]$, Hoeffding's inequality (\pref{lem:hoeffding}) gives
\begin{align*}
\Prob\left(\left|\frac1n\sum_{r=1}^n W_r-F_{p_m}(u_{j-1})\right|
>\sqrt{\frac{L}{2n}}\right)
\le2e^{-L}=\frac{1}{(K+1)JT^2}.
\end{align*}
Whenever $N_{t,m,j}=n\ge1$, the estimate $\widehat F_{t,m,j}$ is the average of the first $n$ entries. At zero count, the error is at most one and the confidence bound holds by initialization, since $C^{\mathrm{th}}_{1,m,j}=\sqrt{L/2}\ge1$ when $K\ge1$ and $J\ge2$. Taking a union bound over all pairs $(m,j)$ and positive sample counts $n\in[T]$ yields
\begin{align*}
\Prob\bigl(\mathcal E^{\mathrm{th}}\text{~does not hold}\bigr)
&\le\sum_{m=1}^{K+1}\sum_{j=1}^J\sum_{n=1}^T
\frac{1}{(K+1)JT^2}
=\frac{1}{T}.
\end{align*}
\end{proof}
 
The next lemma shows that, on $\mathcal E^{\mathrm{th}}$, $\mathsf U_t^{\mathrm{th}}(m,\gamma)$ is a valid upper confidence bound on the discretized revenue $R_m^{(J)}(\gamma)$ with width at most $w_t(m,\gamma)$. It also bounds the error of approximating $R(p_m,\gamma)$ by $R_m^{(J)}(\gamma)$.
 
\begin{lemma}\label{lem:main-ucb-validity}
On $\mathcal E^{\mathrm{th}}$, for every $t\in[T]$, $m\in[K+1]$, and $\gamma\in[0,D]$, we have
\begin{equation}
0\le\mathsf U_t^{\mathrm{th}}(m,\gamma)-R_m^{(J)}(\gamma)
\le2B_{t,m}(\gamma)=w_t(m,\gamma).
\label{eq:threshold-selected-width}
\end{equation}
Moreover, for every $m\in[K+1]$ and $\gamma\in[0,D]$, we have
\begin{equation}
0\le R_m^{(J)}(\gamma)-R(p_m,\gamma)
\le p_m\Delta\le P\Delta.
\label{eq:threshold-integration-error}
\end{equation}
\end{lemma}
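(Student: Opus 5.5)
The proof splits into two independent parts, matching the two displays of \pref{lem:main-ucb-validity}.

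\emph{First display.} We start from the definitions. The discretized revenue is $R_m^{(J)}(\gamma)=p_m\Delta\sum_{j:u_{j-1}<\gamma}(1-F_{p_m}(u_{j-1}))$, and the index $\mathsf U_t^{\mathrm{th}}(m,\gamma)$ sums over exactly the same index set. Therefore
\[
\mathsf U_t^{\mathrm{th}}(m,\gamma)-R_m^{(J)}(\gamma)=p_m\Delta\sum_{j\in[J]:u_{j-1}<\gamma}\Bigl(F_{p_m}(u_{j-1})-\widehat F_{t,m,j}+C^{\mathrm{th}}_{t,m,j}\Bigr).
\]
On $\mathcal E^{\mathrm{th}}$, \pref{eq:threshold-confidence} gives $|F_{p_m}(u_{j-1})-\widehat F_{t,m,j}|\le C^{\mathrm{th}}_{t,m,j}$. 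Hence each summand lies in $[0,2C^{\mathrm{th}}_{t,m,j}]$. Since $p_m\Delta\ge0$, summing yields $0\le \mathsf U_t^{\mathrm{th}}(m,\gamma)-R_m^{(J)}(\gamma)\le 2B_{t,m}(\gamma)$. If the index set is empty, both sides are zero. This part holds for all $t\in[T+1]$, so in particular for $t\in[T]$.

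\emph{Second display.} This part is deterministic. We write $R(p_m,\gamma)=p_m\int_0^\gamma S_{p_m}(u)\,\dd u$ and split $[0,\gamma]$ along the grid. For each $j\in[J]$ with $u_{j-1}<\gamma$, let $I_j=[u_{j-1},\min\{u_j,\gamma\}]$. These intervals cover $[0,\gamma]$ because $\gamma\le D=u_J$, and they overlap only at endpoints. Because $S_{p_m}$ is nonincreasing, $S_{p_m}(u)\le S_{p_m}(u_{j-1})$ on $I_j$, and $|I_j|\le\Delta$. So $\int_{I_j}S_{p_m}\le \Delta S_{p_m}(u_{j-1})$, and summing over $j$ gives $R(p_m,\gamma)\le R_m^{(J)}(\gamma)$.

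For the upper bound on the gap, first let $j^\ast$ be the largest index with $u_{j^\ast-1}<\gamma$; if $\gamma=0$ both quantities vanish. Every $I_j$ with $j<j^\ast$ is a full cell $[u_{j-1},u_j]$. On such a cell $S_{p_m}(u)\ge S_{p_m}(u_j)$, so $\int_{I_j}S_{p_m}\ge\Delta S_{p_m}(u_j)$. We then telescope:
\[
R_m^{(J)}(\gamma)-R(p_m,\gamma)\le p_m\Delta\Bigl[\sum_{j=1}^{j^\ast}S_{p_m}(u_{j-1})-\sum_{j=1}^{j^\ast-1}S_{p_m}(u_j)\Bigr]=p_m\Delta\, S_{p_m}(0)\le p_m\Delta.
\]
Here the contribution of the last, possibly partial, cell $I_{j^\ast}$ is lower bounded by zero. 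Finally, $p_m\le P$ gives $p_m\Delta\le P\Delta$.

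Both parts are routine; the only delicate point is the last partial cell. To handle it, I drop that cell's integral, using only that it is nonnegative. The telescoping then leaves exactly one surviving term, $S_{p_m}(u_0)\le1$. The monotonicity used throughout is that of the survival function $S_{p_m}$ in $u$, which holds automatically for any distribution; \pref{assum:monotone-demand} is not needed.
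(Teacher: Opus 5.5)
Your proposal is correct and follows essentially the same route as the paper. The first display uses the same summand-wise bound $F_{p_m}(u_{j-1})-\widehat F_{t,m,j}+C^{\mathrm{th}}_{t,m,j}\in[0,2C^{\mathrm{th}}_{t,m,j}]$. The second display uses the same comparison of each grid cell's integral of the nonincreasing survival function with its left and right endpoint values, telescoping to $p_m\Delta S_{p_m}(u_0)$ after discarding the last partial cell; the paper does this step via $[0,u_{n-1}]\subseteq[0,\gamma]$.
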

 
\begin{proof}
We first establish \pref{eq:threshold-selected-width}. Since $S_{p_m}(u_{j-1})=1-F_{p_m}(u_{j-1})$, the definitions of $\mathsf U_t^{\mathrm{th}}$ and $R_m^{(J)}$ give
\begin{align*}
\mathsf U_t^{\mathrm{th}}(m,\gamma)-R_m^{(J)}(\gamma)
=p_m\Delta\sum_{j\in[J]:u_{j-1}<\gamma}
\left(F_{p_m}(u_{j-1})-\widehat F_{t,m,j}+C^{\mathrm{th}}_{t,m,j}\right).
\end{align*}
On $\mathcal E^{\mathrm{th}}$, each summand in parentheses lies in $[0,2C^{\mathrm{th}}_{t,m,j}]$. Summing gives \pref{eq:threshold-selected-width} simultaneously for every $\gamma\in[0,D]$, since changing inventory only selects a subset of the same threshold estimates.
 
For the integration error, if $\gamma=0$, then both $R_m^{(J)}(\gamma)$ and $R(p_m,\gamma)$ equal zero. Otherwise, let $n\triangleq|\{j\in[J]:u_{j-1}<\gamma\}|$, so that $n\ge1$ and $u_{n-1}<\gamma\le u_n\le D$. Since $S_{p_m}(\cdot)$ is nonincreasing, for every $j\in[n]$ we have
$
\int_{u_{j-1}}^{u_j}S_{p_m}(u)\,\dd u
\le\Delta S_{p_m}(u_{j-1})$ and if $j\ge2$, $\Delta S_{p_m}(u_{j-1})
\le\int_{u_{j-2}}^{u_{j-1}}S_{p_m}(u)\,\dd u.$ Summing the first inequality over $j\in[n]$ and the second inequality over $2\leq j\leq n$, and using $[0,u_n]\supseteq[0,\gamma]$ and $[0,u_{n-1}]\subseteq[0,\gamma]$, we have
\begin{align}
R(p_m,\gamma)
=p_m\int_0^\gamma S_{p_m}(u)\,\dd u
\le R_m^{(J)}(\gamma)
&\le p_m\Delta S_{p_m}(u_0)+p_m\int_0^{u_{n-1}}S_{p_m}(u)\,\dd u\nonumber\\
&\le p_m\Delta+R(p_m,\gamma),
\label{eq:riemann-telescope}
\end{align}
where the last inequality uses $S_{p_m}(u)\le1$ for any $u$ and $u_{n-1}<\gamma$. This proves \pref{eq:threshold-integration-error}.
\end{proof}
 
The final lemma controls the cumulative bonus, and hence the cumulative selected width $\sum_{t=1}^T w_t(m_t,\gamma_t)=2\sum_{t=1}^T B_{t,m_t}(\gamma_t)$. Each threshold estimate pools all observations at its price whose inventories exceed that threshold. Consequently, every radius appearing in the selected bonus is charged to an update of its shared count.
 
\begin{lemma}\label{lem:main-bonus-control}
On every sample path, the sequence generated by \pref{alg:threshold-ucb} satisfies
\begin{align}
\sum_{t=1}^T B_{t,m_t}(\gamma_t)
\le3PD\sqrt{(K+1)TL}.
\label{eq:main-bonus-control}
\end{align}
\end{lemma}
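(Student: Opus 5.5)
We charge each radius appearing in the selected bonus to an update of its shared count, and then sum per threshold in the same way as the radius summation in the proof of \pref{thm:double-grid-upper}.

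\textbf{Step 1: rewrite the sum.} Expanding the definition, and using $p_{m_t}\le P$,
\[
\sum_{t=1}^T B_{t,m_t}(\gamma_t)\le P\Delta\sum_{j=1}^J\sum_{t=1}^T \one\{u_{j-1}<\gamma_t\}\,C^{\mathrm{th}}_{t,m_t,j}.
\]
The key observation is that every pair $(m_t,j)$ contributing at round $t$ is exactly a pair updated in Lines~\ref{algline:threshold-count-N}--\ref{algline:threshold-radius-update}. The condition $u_{j-1}<\gamma_t$ is the update condition, so $N_{t+1,m_t,j}=N_{t,m_t,j}+1$. Moreover, $C^{\mathrm{th}}_{t,m,j}=\sqrt{L/(2\max\{1,N_{t,m,j}\})}$; at zero count this matches the initialization $\sqrt{L/2}$.

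\textbf{Step 2: fix a threshold and sum over prices.} Fix $j$ and $m$. The rounds with $m_t=m$ and $u_{j-1}<\gamma_t$ see the counts $0,1,\ldots,N_{T+1,m,j}-1$ in order. Using $\sum_{r=0}^{n-1}(\max\{1,r\})^{-1/2}\le 2\sqrt n+1\le 3\sqrt n$ for $n\ge1$ (and an empty sum for $n=0$), their contribution is at most
\[
\sqrt{L/2}\cdot 3\sqrt{N_{T+1,m,j}}.
\]
Each round updates at most one price for a given $j$, so $\sum_{m}N_{T+1,m,j}\le T$. Cauchy--Schwarz over the $K+1$ prices then gives
\[
\sum_{t}\one\{u_{j-1}<\gamma_t\}\,C^{\mathrm{th}}_{t,m_t,j}\le 3\sqrt{L/2}\sqrt{(K+1)T}\le 3\sqrt{(K+1)TL}.
\]

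\textbf{Step 3: sum over thresholds.} Multiplying by $P\Delta$ and summing over the $J$ thresholds gives $P\Delta J\cdot 3\sqrt{(K+1)TL}=3PD\sqrt{(K+1)TL}$, since $\Delta J=D$. This is \pref{eq:main-bonus-control}.

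The only delicate point is the bookkeeping in Step 1: the radius used at round $t$ must be the one indexed by the pre-update count, and each contributing $(m_t,j)$ must receive exactly one increment. This holds by construction of the algorithm. I would also check the constant: $\sqrt{1/2}\cdot(2\sqrt n+1)\le 3\sqrt n$ holds comfortably for all $n\ge1$.
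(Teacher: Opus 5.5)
Your proposal is correct and follows essentially the same route as the paper: charge each radius in the selected bonus to the increment of its shared count $N_{t,m_t,j}$, sum the radii along the counts $0,\ldots,N_{T+1,m,j}-1$, apply Cauchy--Schwarz over the $K+1$ prices using $\sum_m N_{T+1,m,j}\le T$, and use $J\Delta=D$. The only cosmetic difference is that you bound $\sum_{r=0}^{n-1}(\max\{1,r\})^{-1/2}$ by $2\sqrt n+1\le 3\sqrt n$ rather than the paper's $2\sqrt n$, which still yields the stated constant $3$.
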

 
\begin{proof}
For each pair $(m,j)$, every selected bonus containing $C^{\mathrm{th}}_{t,m,j}$ also increments $N_{t,m,j}$ (Line~\ref{algline:threshold-count-N} of \pref{alg:threshold-ucb}). The counts before these visits are therefore $0,\ldots,N_{T+1,m,j}-1$. As in the proof of \pref{thm:double-grid-upper}, we use $\sum_{r=0}^{n-1}(\max\{1,r\})^{-1/2}\le2\sqrt n$ for every integer $n\ge1$, with an empty sum when $n=0$. Therefore, on every sample path, we know that
\begin{align*}
&\sum_{t=1}^T B_{t,m_t}(\gamma_t)\\
&=\Delta\sum_{j=1}^J\sum_{t:u_{j-1}<\gamma_t}p_{m_t}C^{\mathrm{th}}_{t,m_t,j}
\tag{definition of $B_{t,m_t}$}\\
&\le P\Delta\sqrt{\frac L2}
\sum_{j=1}^J\sum_{m=1}^{K+1}
\sum_{r=0}^{N_{T+1,m,j}-1}\frac{1}{\sqrt{\max\{1,r\}}}
\tag{grouping visits by $(m,j)$}\\
&\le3P\Delta\sqrt L\sum_{j=1}^J\sum_{m=1}^{K+1}\sqrt{N_{T+1,m,j}}
\tag{summing the radii over visits}\\
&\le3P\Delta\sqrt{(K+1)L}\sum_{j=1}^J\sqrt{\sum_{m=1}^{K+1}N_{T+1,m,j}}
\tag{Cauchy--Schwarz over prices}\\
&=3P\Delta\sqrt{(K+1)L}\sum_{j=1}^J\sqrt{\sum_{t=1}^T\one\{u_{j-1}<\gamma_t\}}
\tag{one price update per eligible threshold}\\
&\le3PD\sqrt{(K+1)TL},
\end{align*}
where the last inequality uses $J\Delta=D$ and the fact that each threshold is eligible in at most $T$ rounds.
Note that unlike the per-bin counts in \pref{eq:double-grid-radius-sum}, $N_{T+1,m,j}$ pools observations from all inventories exceeding $u_{j-1}$, so no separate count remains for an inventory bin. This proves \pref{eq:main-bonus-control}.
\end{proof}

\section{Lower Bound}\label{app:lower}

\pref{sec:algorithm} gives $\otil(T^{3/4})$ expected regret for Double-Grid-UCB (\pref{alg:double-grid-ucb}) and $\otil(T^{2/3})$
expected regret for Threshold-UCB (\pref{alg:threshold-ucb}). We now show that the latter exponent is optimal for
the nonparametric model. The hard instances use constant inventory
and binary demand, so censoring disappears. This restriction embeds classical stochastic
posted pricing~\citep{kleinberg2003value} in our model.

\begin{theorem}\label{thm:lower}
There is a universal constant $c>0$ such that, for all sufficiently large
$T$, every dynamic-pricing policy admits an instance satisfying
\pref{sec:model} and \pref{assum:monotone-demand} such that $\E[\Reg_T]\ge cT^{2/3}$.
\end{theorem}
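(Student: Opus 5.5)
The plan is a reduction from stochastic posted pricing with i.i.d. valuations, the setting of \citet{kleinberg2003value}, followed by their $\Omega(T^{2/3})$ lower bound. Take $P=D=1$ and constant inventory $\gamma_t\equiv1$, which is a legitimate (trivially adaptive) inventory process in \pref{sec:model}. Given a valuation distribution $G$ on $[0,1]$, let a valuation $V_t\sim G$ be drawn i.i.d. each round, independently of everything else, and set $Y_t(p)\triangleq\one\{V_t\ge p\}$. Then $Y_t(\cdot)$ is a nonnegative random function with the same law every round and is independent of $\calG_t$. It is also pointwise nonincreasing in $p$, so by the first-order dominance remark after \pref{assum:monotone-demand}, the assumption holds. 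Since $Y_t\in\{0,1\}$ and $\gamma_t=1$, we have $X_t=Y_t(p_t)$: there is no censoring. The learner therefore observes exactly the posted-price feedback $\one\{V_t\ge p_t\}$, and $R(p,1)=p\,\Prob(V\ge p)$ is exactly the posted-price expected revenue. Hence the regret \pref{eq:continuous-regret} coincides with the stochastic posted-price regret against the best fixed price, since the inventory is constant.

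Any pricing policy in our model then induces a posted-price policy with identical information and identical regret, and conversely. The lower bound of \citet{kleinberg2003value} for i.i.d. valuations states that for every such policy there is a valuation distribution in their hard family with expected regret at least $cT^{2/3}$ for large $T$. Transferring this instance gives the theorem.

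For self-containedness, I would sketch their construction rather than only cite it. Start from a base distribution whose revenue curve $p\,\Prob(V\ge p)$ is flat, or has a plateau, over an interval of prices. Partition the plateau into $K\asymp T^{1/3}$ cells, and in one cell add a bump that raises revenue by $\epsilon\asymp T^{-1/3}$, while keeping $\Prob(V\ge p)$ a valid nonincreasing survival function. Pricing outside the bumped cell costs about $\epsilon$ per round. Pricing inside a cell yields Bernoulli feedback whose KL divergence from the base instance is $O(\epsilon^2)$ per round, and pricing far from every cell yields no information. A standard averaging argument over the $K$ cells then follows: the learner must either spend about $1/\epsilon^2$ rounds in each of the $K$ cells to locate the bump, or incur regret about $\epsilon T$. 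This gives $\min\{\epsilon T,\,K/\epsilon^2\cdot\epsilon\}\asymp T^{2/3}$.

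The main obstacle is this last step: making the bump local, so that information at price $p$ really depends only on nearby cells and the per-round KL divergence is small, while keeping the perturbed function a valid survival function. If I merely cite \citet{kleinberg2003value}, the only care needed is verifying that their feedback model and benchmark match ours exactly, which the reduction above establishes.
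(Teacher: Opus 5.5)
Your proposal is correct and is essentially the paper's proof. The paper uses the same embedding: $P=D=1$, $\gamma_t\equiv1$, and $Y_t(p)=\one\{V_t\ge p\}$, so there is no censoring and \pref{assum:monotone-demand} holds. After conditioning on $\calG_t$, the expected regret coincides with the i.i.d.-valuation posted-price regret, and the paper then simply cites the $\Omega(T^{2/3})$ bound of \citet{kleinberg2003value}. Your sketch of the bump construction is optional extra material that the paper does not include.
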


\begin{proof}
It suffices to show that the classical unit-demand pricing problem studied by \citet{kleinberg2003value} is a special case of our model. Their lower-bound construction shows that, for every pricing policy and all sufficiently large $T$, there exists a fixed valuation distribution on $[0,1]$ under which expected regret relative to a price maximizing expected revenue is at least $cT^{2/3}$, for a universal constant $c>0$.

In this classical problem, buyer valuations $V_1,\ldots,V_T$ are drawn i.i.d. from an unknown distribution $\nu$ on $[0,1]$, independently of the learner's random seed. Denote by $\Prob_\nu$ and $\E_\nu$ probability and expectation, respectively, over the valuation sequence and the learner's randomization under valuation law $\nu$. At each round, the learner posts a price $p_t\in[0,1]$ based on past observations, observes only the purchase indicator $\one\{V_t\ge p_t\}$, and earns revenue $p_t\one\{V_t\ge p_t\}$. For any fixed price $p$, the purchase probability is $\Prob_\nu(V_t\ge p)=\nu([p,1])$, which does not depend on $t$. Hence, the expected regret is
\begin{equation*}
T\max_{p\in[0,1]}p\Prob_\nu(V_t\ge p)
-\E_\nu\left[\sum_{t=1}^T p_t\one\{V_t\ge p_t\}\right].
\end{equation*}

To embed this problem in our model, set $P=D=1$, choose $\gamma_t=1$ in every round, and define $Y_t(p)\triangleq\one\{V_t\ge p\}$. The potential-demand functions are stationary and independent of the pre-demand information $\calG_t$, as required by \pref{sec:model}. Moreover, for every $\gamma\in[0,1]$, we have $s(p,\gamma)=\gamma\Prob_\nu(V_t\ge p)$, which is nonincreasing in $p$ and therefore satisfies \pref{assum:monotone-demand}. Since demand is binary, $X_t=Y_t(p_t)$, so the observations and realized revenues coincide with those in the classical problem. Finally, constant inventory gives $R(p,1)=p\Prob_\nu(V_t\ge p)$, and conditioning on $\calG_t$ shows that $\E_\nu[\Reg_T]$ equals the expected regret displayed above. The classical lower bound therefore applies to this instance, proving the theorem.
\end{proof}
\section{Experimental Details and Additional Additive Results}
\label{app:experiment-details}

This section provides the experimental details referred to in
\pref{sec:experiments}. We first describe the benchmark algorithms, then specify the inventory processes and their
relation to C20CB's assumptions, give the shared tuning and evaluation
protocol, and finally present results for all 24 additive-noise environments.
\pref{app:multiplicative} gives the multiplicative-noise specifications and
the corresponding additional results.

\subsection{Benchmark Algorithms}\label{app:benchmark}

Here we provide the omitted details of the benchmark algorithms used in our experiments. We first describe KM-UCB and Naive-UCB, which use different estimators on the same price grid, and then present the doubling version of C20CB.

KM-UCB and Naive-UCB use the uniform price grid $\calP_K$ and a tunable exploration parameter $\alpha>0$. Both select the price with the largest estimated revenue plus an exploration bonus, breaking ties in favor of the higher price. Their hyperparameters are selected using the tuning protocol described in \pref{app:reproducibility-details}.

\paragraph{KM-UCB.}
KM-UCB uses the Kaplan--Meier estimator described in \pref{sec:threshold-construction} to estimate demand at each price. As discussed there, adaptive price selection means that the retained observations need not satisfy the i.i.d. sampling assumptions required by the classical confidence guarantee. We therefore implement KM-UCB as a heuristic benchmark: it integrates the estimated survival function up to the current inventory and adds an exploration bonus, as summarized in \pref{alg:benchmark-km-ucb}. We do not establish a regret guarantee for this benchmark under our adaptive sampling protocol.

\begin{algorithm2e}[H]
\caption{KM-UCB}
\label{alg:benchmark-km-ucb}
\KwIn{Grid size $K$ and exploration parameter $\alpha>0$}
Set $p_m=(m-1)P/K$ and initialize an empty observation list $\mathcal H_{1,m}$ for each $m\in[K+1]$\;
\For{$t=1,\ldots,T$}{
    Observe $\gamma_t$\;
    \ForEach{$m\in[K+1]$}{
        Set $N_{t,m}=|\mathcal H_{t,m}|$ and let $\mathcal Z_{t,m}$ contain the distinct sales in $\mathcal H_{t,m}$\;
        \ForEach{$z\in\mathcal Z_{t,m}$}{
            Set $n_{t,m}(z)=\sum_{(x,e)\in\mathcal H_{t,m}}\one\{x\ge z\}$ and $d_{t,m}(z)=\sum_{(x,e)\in\mathcal H_{t,m}}e\,\one\{x=z\}$\;
        }
        Set $\widehat S^{\mathrm{km}}_{t,m}(u)=\prod_{z\in\mathcal Z_{t,m}:\,z\le u}(1-d_{t,m}(z)/n_{t,m}(z))$, with an empty product equal to one\;
        Let $b_0,\ldots,b_q$ be the sorted distinct points in $\{0,\gamma_t\}\cup\{z\in\mathcal Z_{t,m}:0<z<\gamma_t\}$ and set $\widehat R^{\mathrm{km}}_{t,m}(\gamma_t)=p_m\sum_{r=0}^{q-1}(b_{r+1}-b_r)\widehat S^{\mathrm{km}}_{t,m}(b_r)$, with an empty sum equal to zero\;
        Set $\mathsf U_t^{\mathrm{km}}(m,\gamma_t)=\widehat R^{\mathrm{km}}_{t,m}(\gamma_t)+PD\sqrt{\alpha\log(\max\{2,T\})/(1+N_{t,m})}$\;
    }
    Choose the largest $m_t\in\arg\max_{m\in[K+1]}\mathsf U_t^{\mathrm{km}}(m,\gamma_t)$\;
    Post $p_t=p_{m_t}$ and observe $X_t$\;
    Set $\mathcal H_{t+1,m}=\mathcal H_{t,m}$ for all $m\in[K+1]$ and append $(X_t,\one\{X_t<\gamma_t\})$ to $\mathcal H_{t+1,m_t}$\;
}
\end{algorithm2e}

\paragraph{Naive-UCB.}
Naive-UCB uses the same price-selection rule but treats every sale as a fully observed demand realization. Specifically, it estimates revenue at the current inventory by averaging $p_m\min\{X_s,\gamma_t\}$ over previous sales at price $p_m$, as summarized in \pref{alg:benchmark-naive-ucb}. Thus, it ignores the information lost to earlier stockouts and can underestimate revenue at larger inventories.

\begin{algorithm2e}[H]
\caption{Naive-UCB}
\label{alg:benchmark-naive-ucb}
\KwIn{Grid size $K$ and exploration parameter $\alpha>0$}
Set $p_m=(m-1)P/K$ and initialize an empty sales list $\mathcal H_{1,m}$ for each $m\in[K+1]$\;
\For{$t=1,\ldots,T$}{
    Observe $\gamma_t$\;
    \ForEach{$m\in[K+1]$}{
        Set $N_{t,m}=|\mathcal H_{t,m}|$\;
        \lIf{$N_{t,m}=0$}{
            Set $\widehat R^{\mathrm{naive}}_{t,m}(\gamma_t)=p_m\gamma_t$
        }
        \lElse{
            Set $\widehat R^{\mathrm{naive}}_{t,m}(\gamma_t)=\frac{p_m}{N_{t,m}}\sum_{x\in\mathcal H_{t,m}}\min\{x,\gamma_t\}$
        }
        Set $\mathsf U_t^{\mathrm{naive}}(m,\gamma_t)=\widehat R^{\mathrm{naive}}_{t,m}(\gamma_t)+PD\sqrt{\alpha\log(\max\{2,T\})/(1+N_{t,m})}$\;
    }
    Choose the largest $m_t\in\arg\max_{m\in[K+1]}\mathsf U_t^{\mathrm{naive}}(m,\gamma_t)$\;
    Post $p_t=p_{m_t}$ and observe $X_t$\;
    Set $\mathcal H_{t+1,m}=\mathcal H_{t,m}$ for all $m\in[K+1]$ and append $X_t$ to $\mathcal H_{t+1,m_t}$\;
}
\end{algorithm2e}

\paragraph{C20CB-doubling.}
The fixed-horizon benchmark runs C20CB \citep{xu2026dynamic} with horizon $T$. Its doubling version instead starts a fresh C20CB instance on each epoch of length $1,2,4,\ldots$, so its online rule does not require the terminal horizon. As shown in \pref{alg:benchmark-c20cb-doubling}, each instance uses its epoch length to set the exploration length and noise-grid resolution. Only the learner is restarted and the inventory and demand process continues unchanged. If the experiment ends during an epoch, that epoch is truncated.

\begin{algorithm2e}[H]
\caption{C20CB-doubling}
\label{alg:benchmark-c20cb-doubling}
\KwIn{Exploration multiplier $s_\tau>0$, grid multiplier $s_M>0$, and confidence multiplier $c_{\rm conf}>0$}
Initialize epoch length $H=1$ and completed rounds within the epoch $h=0$\;
\For{$t=1,2,\ldots$}{
    \lIf{$h=H$}{
        Set $H\leftarrow 2H$ and $h=0$
    }
    \If{$h=0$}{
        Set $\tau=\min\{H,\max\{1,\lceil s_\tau\sqrt H\rceil\}\}$ and $M=\max\{1,\lceil s_MH^{1/4}\rceil\}$\;
        Initialize a fresh instance $\mathcal A=\textsc{C20CB}(H,\tau,M,c_{\rm conf})$\;
    }
    Observe $\gamma_t$ and obtain $p_t=\mathcal A.\textsc{SelectPrice}(\gamma_t)$, post $p_t$, and observe $X_t$\;
    Call $\mathcal A.\textsc{Update}(\gamma_t,p_t,X_t)$\;
    Set $h\leftarrow h+1$\;
}
\end{algorithm2e}

\subsection{Inventory configurations and assumption checks}
\label{app:additive-assumption-map}
In this section, we provide the complete inventory configurations used in
\pref{sec:experiments} and verify which demand-inventory combinations satisfy
the conditions of C20CB. The demand curves and additive-noise distributions are specified in
\pref{sec:experiments}. \pref{tab:inventory-configs} completes the inventory
specifications, using the same support and distribution codes as the figures.
For the piecewise processes, we use $\lceil T^{1/3}\rceil$ blocks of length
$\lceil T/\lceil T^{1/3}\rceil\rceil$, truncating the last block at $T$.
The regimes cycle from low to central to high, and inventories are sampled
independently within each block. All inventory paths are generated before the interaction starts.

\begin{table}[H]
\centering
\caption{Inventory processes used in both noise suites. The codes combine
support pattern with distribution: \texttt{U} denotes uniform and \texttt{G}
denotes clipped Gaussian. Piecewise processes cycle through three regimes.}
\label{tab:inventory-configs}
\begin{tabular}{clp{0.62\textwidth}}
\toprule
Code & Type & Exact configuration\\
\midrule
\texttt{1-U} & Uniform & $\gamma_t\sim\mathrm{Unif}[0.86,1.29]$.\\
\texttt{2-U} & Uniform & $\gamma_t\sim\mathrm{Unif}[0.02,1.75]$.\\
\texttt{1-G} & Clipped Gaussian & $\gamma_t=\operatorname{clip}\{N(1.075,0.14^2),0.86,1.29\}$.\\
\texttt{2-G} & Clipped Gaussian & $\gamma_t=\operatorname{clip}\{N(0.85,0.55^2),0.02,1.75\}$.\\
\texttt{3-U} & Piecewise uniform & Cycle $[0.02,0.30]$, $[0.86,1.29]$, and $[1.45,1.75]$.\\
\texttt{3-G} & Piecewise Gaussian & Cycle clipped $N(0.12,0.05^2)$,
$N(1.075,0.14^2)$, and $N(1.60,0.06^2)$ on the same three supports.\\
\bottomrule
\end{tabular}
\end{table}
We next examine more closely which combinations satisfy or violate the conditions of C20CB in~\citet{xu2026dynamic}.
Both additive linear models use noise $\varepsilon_t\sim\mathrm{Unif}[-0.10,0.10]$, so $c=0.10$ in the context of~\citet{xu2026dynamic}  is the tight noise bound.
For Demand A, the true parameters are $(a,b)=(1.4,0.8)$, and the experiments supply $(a_{\max},b_{\min},b_{\max},c)=(1.5,0.75,0.85,0.10)$.
These bounds require $\gamma_t\in(0.85,1.30)$ and $\gamma_t>0.20$, which both the central uniform and central clipped-Gaussian inventory processes satisfy. Even the tightest valid bounds only widen the interval to $(0.70,1.30)$ and the broad and piecewise inventory supports extend outside this interval, so their violations are not caused by loose parameter bounds.

For Demand B, changing the price over $[0,1]$ changes mean demand by only $0.01$, less than the noise support width of $0.20$. Thus, no inventory level can lie below all demand realizations at the lowest price and above all demand realizations at the highest price, as required by C20CB. Specifically, any valid bounds satisfy $a_{\max}\ge a$, $b_{\min}\le b$, and $c\ge0.10$. With $(a,b)=(1.10,0.01)$ and $P=1$, C20CB requires both $\gamma_t>a_{\max}-b_{\min}P+c\ge1.19$ and $\gamma_t<a-c\le1$. These requirements are incompatible for every valid choice of bounds. C20CB is not tested on Demands C and D because their conditional means are nonlinear.

\subsection{Tuning and evaluation protocol}\label{app:reproducibility-details}
 
All experiments use prices in $[0,P]$ with $P=1$ and horizon $T=10{,}000$. We use the same protocol for every environment, that is, for every combination of demand family, inventory process, and noise model in \pref{sec:experiments}.
 
\paragraph{Hyperparameter tuning.}
In each environment, every method is tuned separately with the same budget of $20$ hyperparameter configurations. The first eight configurations include the reference configuration in \pref{tab:tuning-spaces}; the remaining twelve are proposed sequentially by Gaussian-process Bayesian optimization based on the results so far. Each configuration is run on the same five tuning paths, and we select the configuration with the smallest mean cumulative regret at round $T$ over these paths. The selected configuration is then evaluated on $30$ held-out paths that are not used for tuning. Thus, every method receives the same number of tuning runs and evaluation runs.
 
\pref{tab:tuning-spaces} lists the tuned hyperparameters for the additive-noise experiments. For Threshold-UCB, $K$ and $J$ are the numbers of price-grid intervals and thresholds in \pref{alg:threshold-ucb}, and $c_{\rm conf}$ scales the confidence radii $C^{\mathrm{th}}_{t,m,j}$. For KM-UCB and Naive-UCB (\pref{alg:benchmark-km-ucb} and \pref{alg:benchmark-naive-ucb}), $K$ is the price-grid size and $\alpha$ scales the exploration bonus. For C20CB and C20CB-doubling (\pref{alg:benchmark-c20cb-doubling}), $s_\tau$ scales the length $\tau$ of the initial exploration phase, $s_M$ scales the size $M$ of the noise grid, and $c_{\rm conf}$ scales the confidence widths. The reference value of $K$ and $J$ is $\lceil T^{1/3}\rceil=22$, and the reference values of the other hyperparameters are given in parentheses in \pref{tab:tuning-spaces}. Each search range spans from half to twice the reference value and is searched on a logarithmic scale; $K$ and $J$ are rounded to the nearest integer, with ties rounded up.
 
\begin{table}[H]
\centering
\caption{Hyperparameters tuned in the additive-noise experiments with $T=10{,}000$.
Reference values, which are included among the initial configurations, are given in parentheses.}
\label{tab:tuning-spaces}
\small
\begin{tabular}{p{0.19\textwidth}p{0.20\textwidth}p{0.51\textwidth}}
\toprule
Method & Hyperparameters & Search ranges (reference values)\\
\midrule
Threshold-UCB & $K,J,c_{\rm conf}$ &
$K,J\in\{11,\ldots,44\}$ ($22$ each);
$c_{\rm conf}\in[0.5,2]$ ($1$).\\
C20CB & $s_\tau,s_M,c_{\rm conf}$ &
$s_\tau,s_M,c_{\rm conf}\in[0.5,2]$ ($1$ each), with
$\tau=\lceil s_\tau\sqrt T\rceil$ and
$M=\lceil s_MT^{1/4}\rceil$.\\
C20CB-doubling & $s_\tau,s_M,c_{\rm conf}$ & Same as C20CB, with $T$ replaced
by the epoch length $H\in\{1,2,4,\ldots\}$ and $\tau$ capped at $H$.\\
KM-UCB & $K,\alpha$ &
$K\in\{11,\ldots,44\}$ ($22$);
$\alpha\in[1,4]$ ($2$).\\
Naive-UCB & $K,\alpha$ & Same as KM-UCB.\\
\bottomrule
\end{tabular}
\end{table}
 
\paragraph{Fixed implementation choices.}
The remaining settings are fixed across environments. Some of them differ from the choices used in our analysis; we list them explicitly.
\begin{itemize}
\item \emph{Threshold-UCB.} The threshold grid uses $D=1.8$, which upper-bounds every inventory level in \pref{tab:inventory-configs}, so $\Delta=1.8/J$. The index omits the term $j=1$, that is, the threshold $u_0=0$, and sums only over the positive thresholds $u_1,\ldots,u_{J-1}$ below $\gamma_t$. With $\delta=0.05$, the confidence radius is
\[
C^{\mathrm{th}}_{t,m,j}=c_{\rm conf}\sqrt{\frac{\log(2(K+1)JT/\delta)}{2N_{t,m,j}}},\]
if $N_{t,m,j}>0$, and $C^{\mathrm{th}}_{t,m,j}=c_{\rm conf}$ if $N_{t,m,j}=0$ in place of the radius in \pref{alg:threshold-ucb}. Ties in the index are broken in favor of the higher price, as in KM-UCB and Naive-UCB.
\item \emph{C20CB and C20CB-doubling.}
The original C20CB analysis involves several problem-dependent constants controlling the uncertainty from the initial parameter-estimation phase. To keep the benchmark implementation tractable and avoid tuning a large number of C20CB-specific constants, we tie these first-stage uncertainty corrections to a single fixed coefficient, set to $0.25$, for the corresponding $1/\sqrt{\tau}$ terms. The resulting confidence widths are further scaled by the tuned multiplier $c_{\rm conf}$. Both variants also clip the parameter estimates to the supplied bounds $(a_{\max},b_{\min},b_{\max},c)$ in \pref{app:additive-assumption-map}, and use the fallback price $0.5$ if the checks after the initial exploration phase fail.
\end{itemize}
The modifications for multiplicative noise are described in \pref{app:multiplicative}.
 
\paragraph{Evaluation.}
Within each tuning or held-out path, all methods face the same inventory sequence $\gamma_1,\ldots,\gamma_T$ and the same underlying noise draws, so differences in performance come only from the posted prices. The resulting demands can still differ because they depend on the price. Tuning and held-out paths use separate random seeds. We report the cumulative expected regret
\[
\sum_{t=1}^T\left[\max_{p\in\calP_{2000}}R(p,\gamma_t)-R(p_t,\gamma_t)\right],
\]
where $\calP_{2000}$ is the uniform grid of $2001$ prices in $[0,1]$. This fine grid approximates the continuous oracle in \pref{eq:continuous-regret} and is independent of the learner's price grid. The expected revenue $R(p,\gamma)$ is computed in closed form whenever available. For additive Demand D, the revenue is instead approximated numerically using a fixed $41$-point quadrature rule. Each curve shows the mean over the $30$ held-out paths, and the shaded band shows the mean plus or minus one standard deviation, with its lower edge clipped at zero.

\section{Additional additive results}
\label{app:complete-additive-results}

In this section, we provide the complete results for the additive-noise experiments. \pref{fig:additive-A}--\pref{fig:additive-D} show one demand family each, with the six inventory processes arranged in the same $2\times3$ layout: columns vary the inventory support and rows vary the inventory distribution.

These figures complete the additive comparisons in
\pref{fig:diagnostic-tuples}. Threshold-UCB has the smallest mean final regret in $22$ of the $24$ environments. Fixed-horizon C20CB has the smallest mean in the two central-inventory environments for Demand A, where its linear-additive and inventory-support conditions hold. For all other environments, the figures show the effect of relaxing those conditions.

\begin{figure}[H]
\centering
\includegraphics[width=\textwidth]{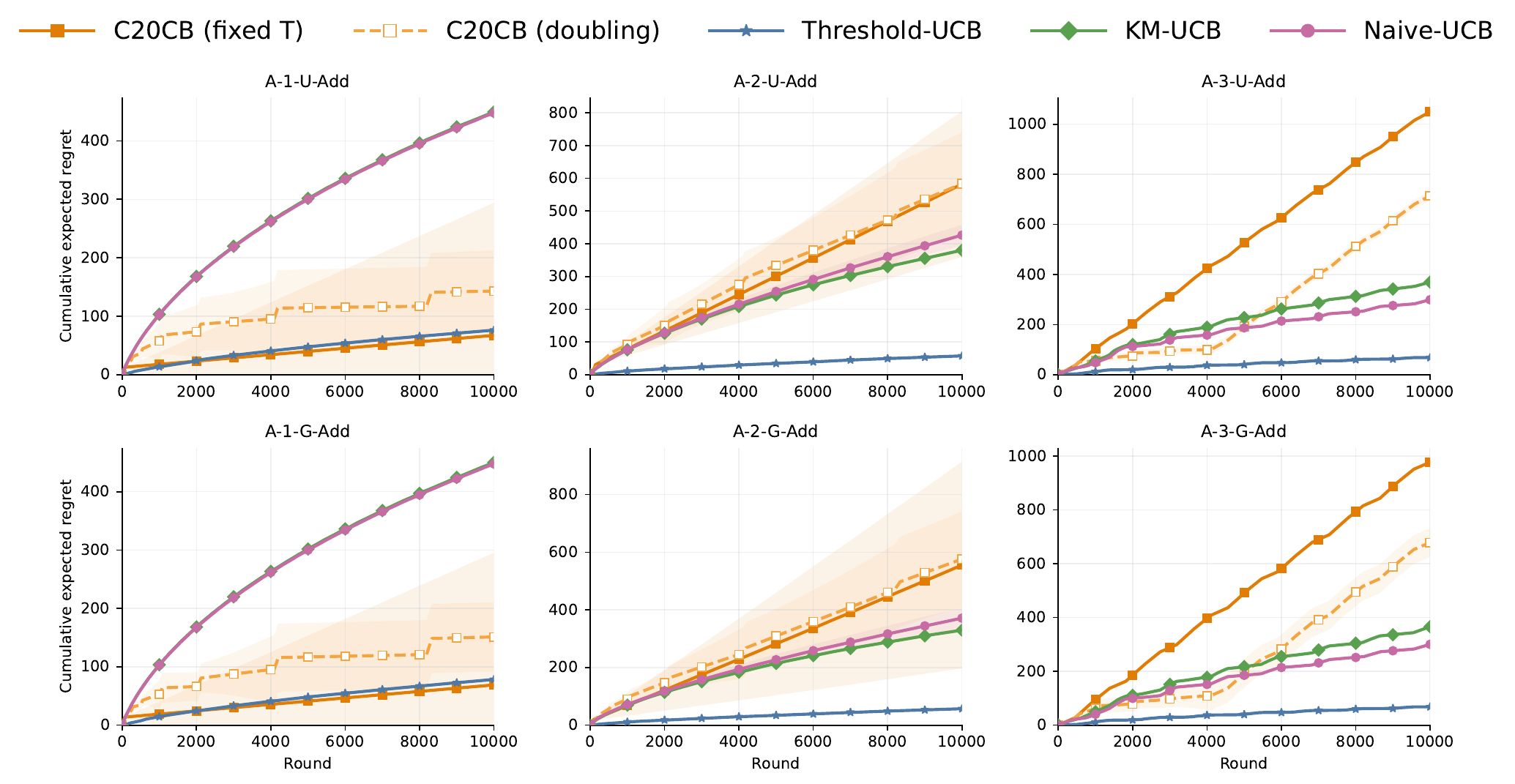}
\caption{Additive-noise Demand A. Columns correspond to the central (\texttt{1}), broad (\texttt{2}), and piecewise-changing (\texttt{3}) inventory supports; the top and bottom rows use uniform (\texttt{U}) and clipped-Gaussian (\texttt{G}) inventory distributions, respectively. Only \texttt{A-1-U-Add} and \texttt{A-1-G-Add} satisfy C20CB's inventory conditions; the broad and piecewise processes test performance when those conditions fail.}
\label{fig:additive-A}
\end{figure}

\begin{figure}[H]
\centering
\includegraphics[width=\textwidth]{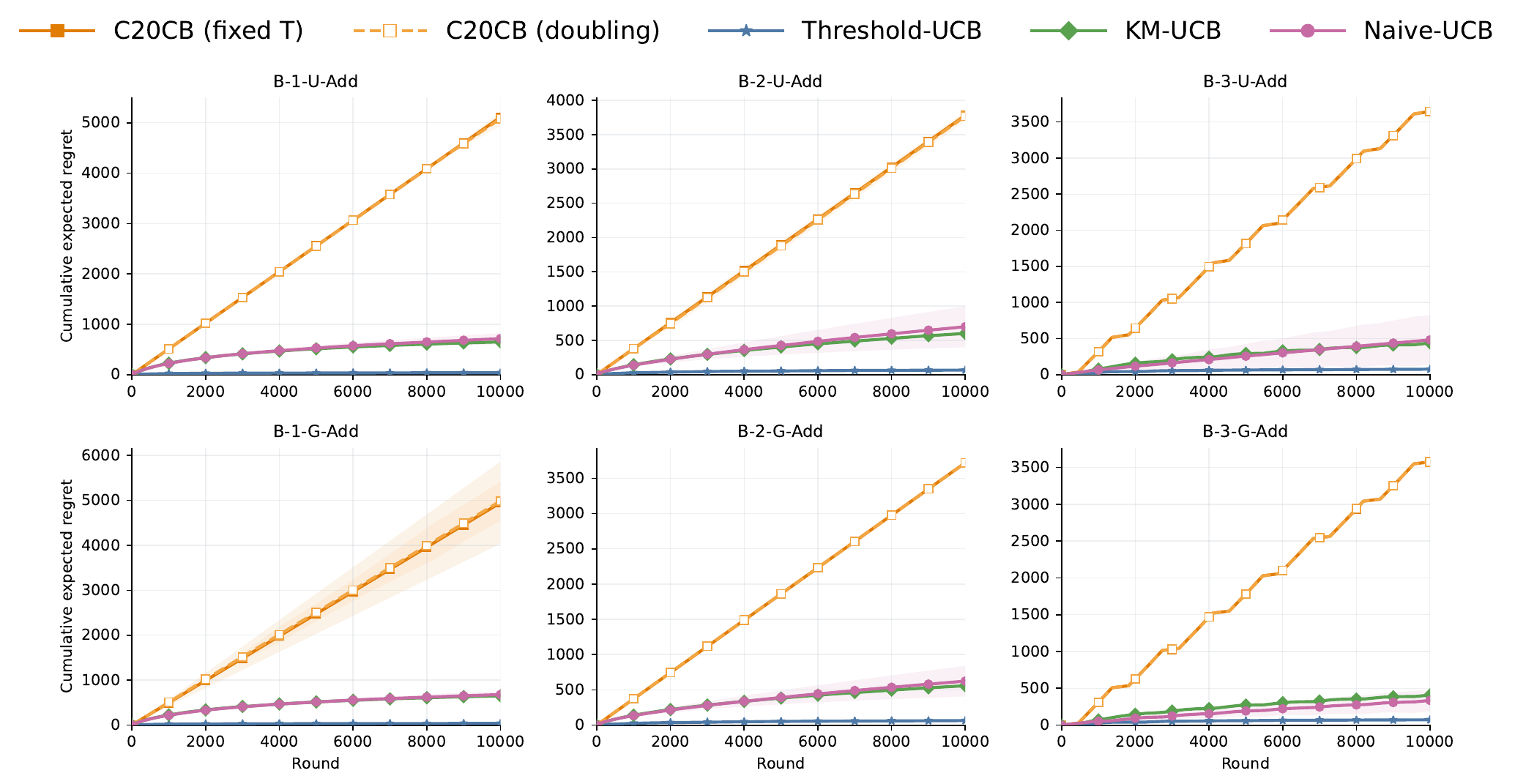}
\caption{Additive-noise Demand B. Columns correspond to the central (\texttt{1}), broad (\texttt{2}), and piecewise-changing (\texttt{3}) inventory supports; the top and bottom rows use uniform (\texttt{U}) and clipped-Gaussian (\texttt{G}) inventory distributions, respectively.}
\label{fig:additive-B}
\end{figure}

\begin{figure}[H]
\centering
\includegraphics[width=\textwidth]{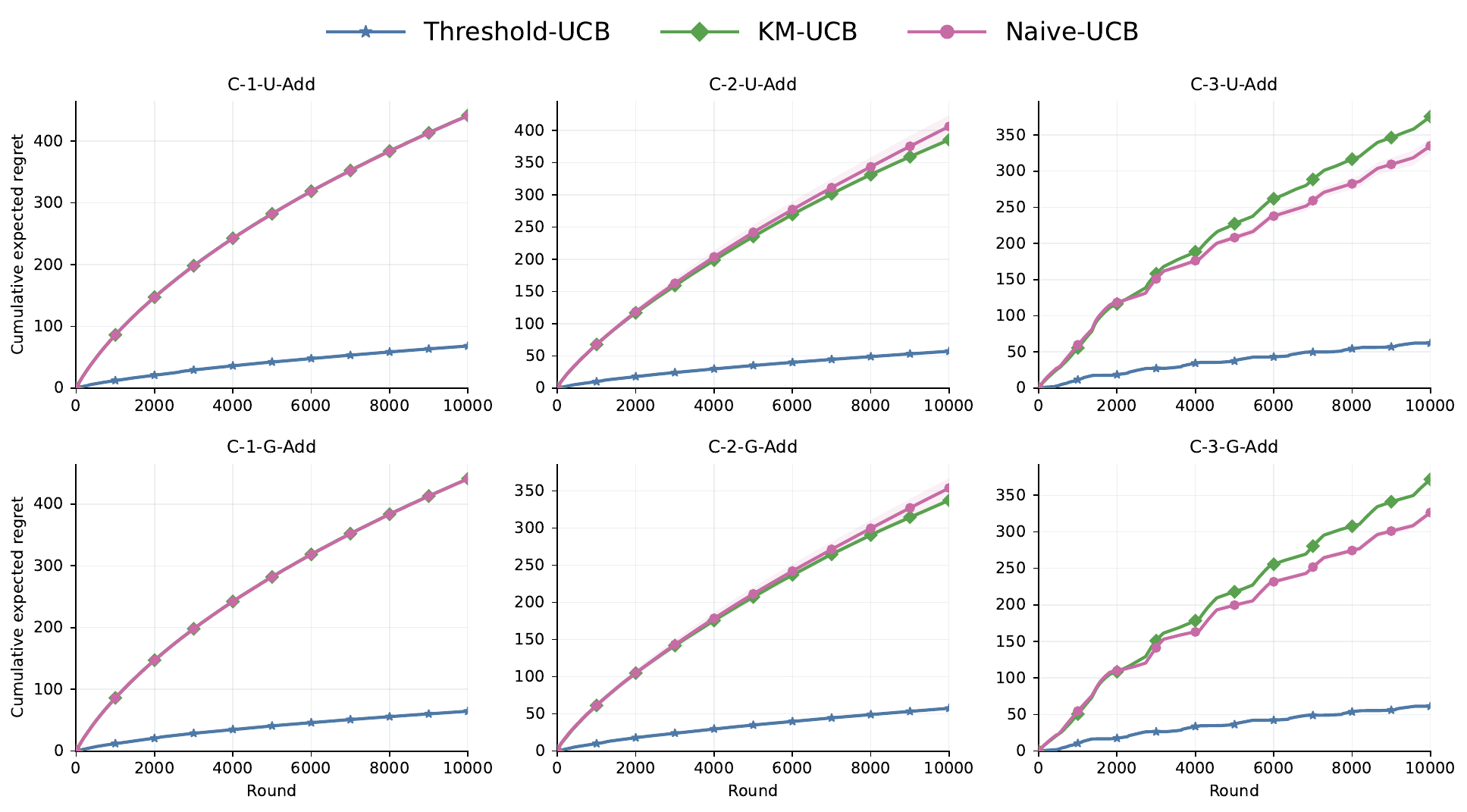}
\caption{Additive-noise Demand C. Columns correspond to the central (\texttt{1}), broad (\texttt{2}), and piecewise-changing (\texttt{3}) inventory supports; the top and bottom rows use uniform (\texttt{U}) and clipped-Gaussian (\texttt{G}) inventory distributions, respectively. C20CB is omitted because the mean demand is nonlinear.}
\label{fig:additive-C}
\end{figure}

\begin{figure}[H]
\centering
\includegraphics[width=\textwidth]{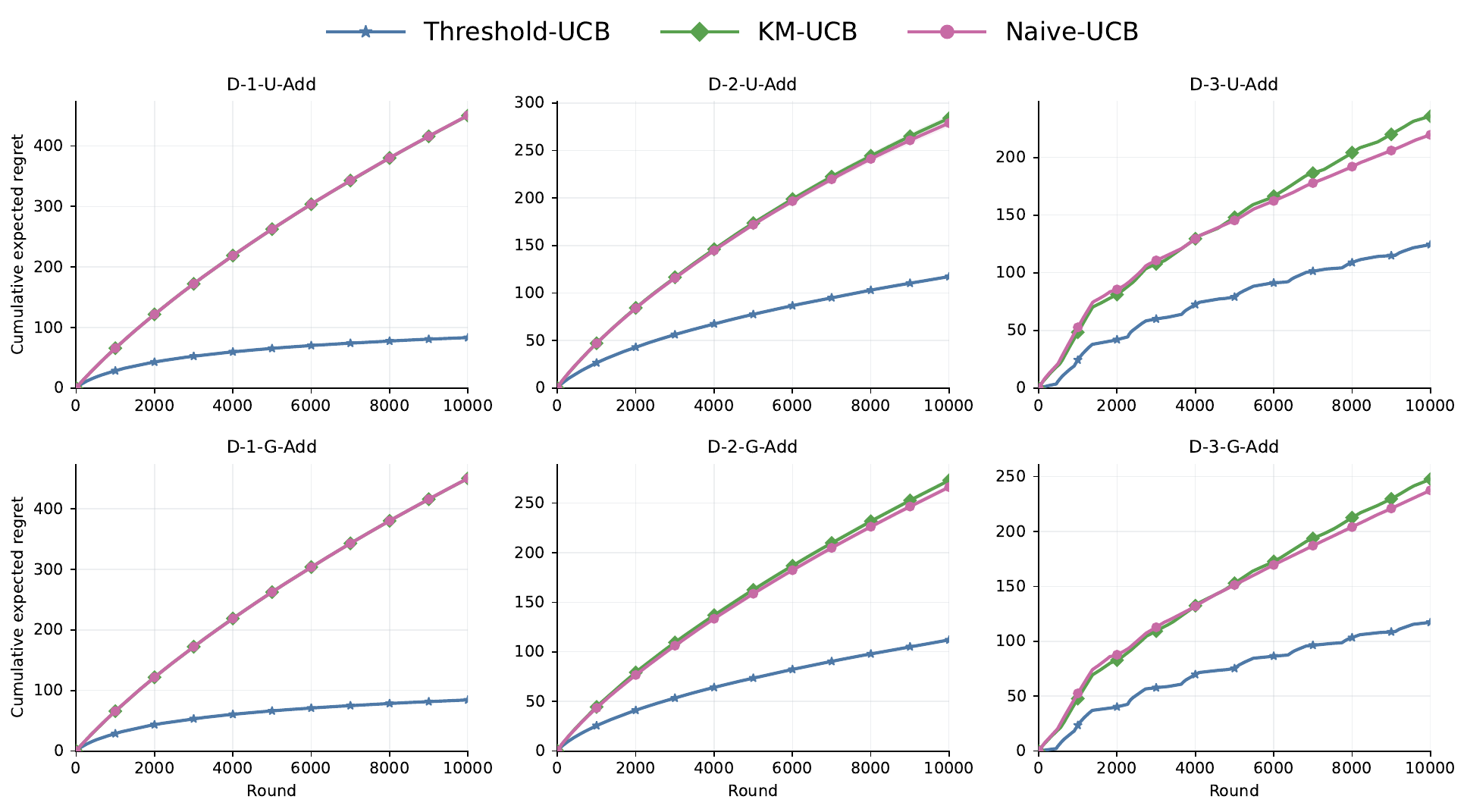}
\caption{Additive-noise Demand D. Columns correspond to the central (\texttt{1}), broad (\texttt{2}), and piecewise-changing (\texttt{3}) inventory supports; the top and bottom rows use uniform (\texttt{U}) and clipped-Gaussian (\texttt{G}) inventory distributions, respectively. C20CB is omitted because the mean demand is nonlinear.}
\label{fig:additive-D}
\end{figure}
\section{Additional Results with Multiplicative Noise}
\label{app:multiplicative}

In this section, we provide the details and complete results for the multiplicative-noise experiments (\texttt{Mult}) in \pref{sec:experiments}. We use the same four demand curves $\mu(p)$ and the same six inventory processes in \pref{tab:inventory-configs}, so there are again $24$ environments. The only change is the noise model: unlike the additive case, the shape of the demand distribution now changes with price, even after centering at its mean. We first specify the noise model and check which assumptions it satisfies, then describe how the protocol in \pref{app:reproducibility-details} is adapted, and finally report the results.

\subsection{Noise specifications and assumption checks}
\label{app:mult-noise}

For each demand family, potential demand is $Y_t(p)=\mu(p)Z_t(p)$, where the multiplier $Z_t(p)$ takes two values:
\[
Z_t(p)=
\begin{cases}
\ell, & \text{with probability } q(p),\\[2pt]
h(p)\triangleq\dfrac{1-\ell q(p)}{1-q(p)}, & \text{with probability } 1-q(p).
\end{cases}
\]
Here, $\ell<1$ is a fixed low multiplier and $q(p)<1$ is the price-dependent probability of the low state. The high multiplier $h(p)\ge1$ is chosen so that $\E[Z_t(p)]=1$. Hence, expected potential demand equals $\mu(p)$ at every price, exactly as in the additive case, and only the shape of the distribution changes. \pref{tab:multiplicative-configs} specifies $\ell$ and $q(p)$ for each family, where, for Demand B, $x(p)\triangleq\operatorname{clip}((p-0.66)/0.18,0,1)$, so that $q(p)$ increases smoothly from $0$ to $0.95$ as $p$ increases from $0.66$ to $0.84$. For Demands A and C, $q(p)=0$ for $p\le0.65$, so demand is deterministic at low prices and becomes random only at high prices.

\begin{table}[H]
\centering
\caption{Multiplier distributions for multiplicative noise. The last column gives the demand bound $D_{\rm exp}\ge\max_{p\in[0,1]}\mu(p)h(p)$, which upper-bounds every realization of potential demand.}
\label{tab:multiplicative-configs}
\begin{tabular}{lccc}
\toprule
Demand & Low multiplier $\ell$ & Low-state probability $q(p)$ & Demand bound $D_{\rm exp}$\\
\midrule
A & $0.20$ & $2(p-0.65)_+$ & $1.8$\\
B & $0.63$ & $0.95\bigl(3x(p)^2-2x(p)^3\bigr)$ & $9$\\
C & $0.20$ & $2(p-0.65)_+$ & $1.35$\\
D & $0.20$ & $\min\{0.85,1.4p\}$ & $2.25$\\
\bottomrule
\end{tabular}
\end{table}

\paragraph{Monotone expected sales.}
All four environments have expected sales that are nonincreasing in price, satisfying \pref{assum:monotone-demand}. Fix $p\le p'$ and $\gamma\ge0$. Since $\mu(p)$ is nonincreasing and $q(p)$ is nondecreasing, we have $\mu(p)\ge\mu(p')$ and $q(p)\le q(p')$. Recall that $Z_t(p)$ equals $\ell$ with probability $q(p)$ and $h(p)$ otherwise, where $h(p)$ is chosen so that $\E[Z_t(p)]=1$. Hence, we have $$(1-q(p))h(p)=1-q(p)\ell=(q(p')-q(p))\ell+(1-q(p'))h(p').$$ Since $z\mapsto\min\{\mu(p')z,\gamma\}$ is concave, this identity implies $$(1-q(p))\min\{\mu(p')h(p),\gamma\}\ge(q(p')-q(p))\min\{\mu(p')\ell,\gamma\}+(1-q(p'))\min\{\mu(p')h(p'),\gamma\}.$$ Adding $q(p)\min\{\mu(p')\ell,\gamma\}$ to both sides gives $\E[\min\{\mu(p')Z_t(p),\gamma\}]\ge\E[\min\{\mu(p')Z_t(p'),\gamma\}]$. Finally, since $\mu(p)\ge\mu(p')$ and $Z_t(p)\ge0$, we have $$\E[\min\{\mu(p)Z_t(p),\gamma\}]\ge\E[\min\{\mu(p')Z_t(p),\gamma\}].$$
Therefore, $s(p,\gamma)\ge s(p',\gamma)$.

\subsection{Changes to the tuning and evaluation protocol}
\label{app:mult-protocol}

We use the tuning and evaluation protocol in \pref{app:reproducibility-details}, including the budget of $20$ configurations on five tuning paths and the $30$ held-out paths, with the following changes.
\begin{itemize}
\item \emph{Demand bound.} Potential demand can now exceed the largest inventory level, and its range differs across families. We therefore replace $D=1.8$ by the family-specific bound $D_{\rm exp}$ in \pref{tab:multiplicative-configs}, both in Threshold-UCB's threshold grid and in the exploration bonuses $PD\sqrt{\alpha\log(\max\{2,T\})/(1+N_{t,m})}$ of KM-UCB and Naive-UCB. Since $Y_t(p)\le D_{\rm exp}$, we have $S_p(u)=0$ for every $u\ge D_{\rm exp}$, so thresholds in $[0,D_{\rm exp}]$ suffice even when inventory exceeds $D_{\rm exp}$.
\item \emph{Threshold count of Threshold-UCB.} To keep the threshold spacing $\Delta=D_{\rm exp}/J$ comparable across families, the reference value of $J$ is $J_0=\lceil D_{\rm exp}T^{1/3}\rceil$, and $J$ is tuned over the integers in $[\max\{2,\lfloor0.5J_0\rfloor\},\lceil2J_0\rceil]$. The ranges of $K$ and $c_{\rm conf}$ are unchanged.
\item \emph{C20CB and C20CB-doubling.} Since no true additive-noise bound exists, both variants additionally tune the noise bound $c\in[0.025,0.4]$ that they use internally, on a logarithmic scale with reference value $0.1$, and the fallback price in $[0.1,0.9]$, on a linear scale with reference value $0.5$. This tuning gives C20CB more flexibility but does not restore its common additive-noise assumption.
\end{itemize}
All other hyperparameter ranges and fixed settings are as in \pref{tab:tuning-spaces} and \pref{app:reproducibility-details}. For evaluation, the expected revenue has the closed form
\[
R(p,\gamma)=p\Bigl[q(p)\min\{\ell\mu(p),\gamma\}+\bigl(1-q(p)\bigr)\min\{h(p)\mu(p),\gamma\}\Bigr],
\]
and regret is computed against the same $2001$-point oracle grid $\calP_{2000}$ as in the additive case.

\subsection{Additional multiplicative results}
\label{sec:complete-multiplicative-results}

In this section, we report the complete results for the multiplicative-noise experiments. \pref{fig:mult-A}--\pref{fig:mult-D} each show one demand family, with the six inventory processes arranged in the same $2\times3$ layout as in \pref{app:complete-additive-results}. Threshold-UCB still has the smallest mean final regret in all $24$ environments. For Demand A, the mean is the same linear curve as in the additive reference and only the noise distribution changes, so the gap between Threshold-UCB and C20CB cannot be attributed to a nonlinear mean. For the nonlinear Demands C and D, we compare only Threshold-UCB, KM-UCB, and Naive-UCB.

\begin{figure}[H]
\centering
\includegraphics[width=\textwidth]{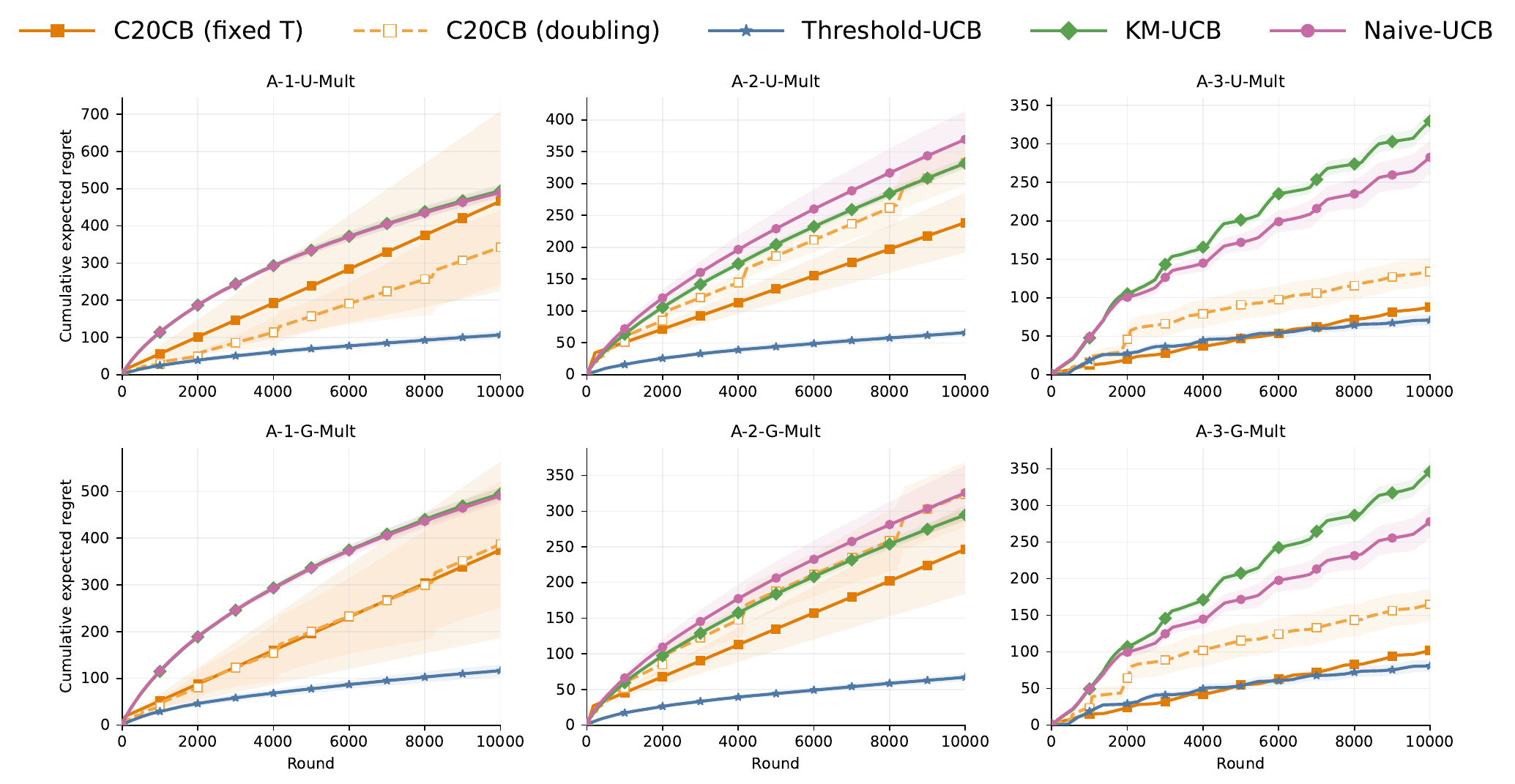}
\caption{Multiplicative-noise Demand A. Columns correspond to the central (\texttt{1}), broad (\texttt{2}), and piecewise-changing (\texttt{3}) inventory supports; the top and bottom rows use uniform (\texttt{U}) and clipped-Gaussian (\texttt{G}) inventory distributions, respectively. The mean matches additive Demand A, but the price-dependent noise violates C20CB's common additive-noise assumption.}
\label{fig:mult-A}
\end{figure}

\begin{figure}[H]
\centering
\includegraphics[width=\textwidth]{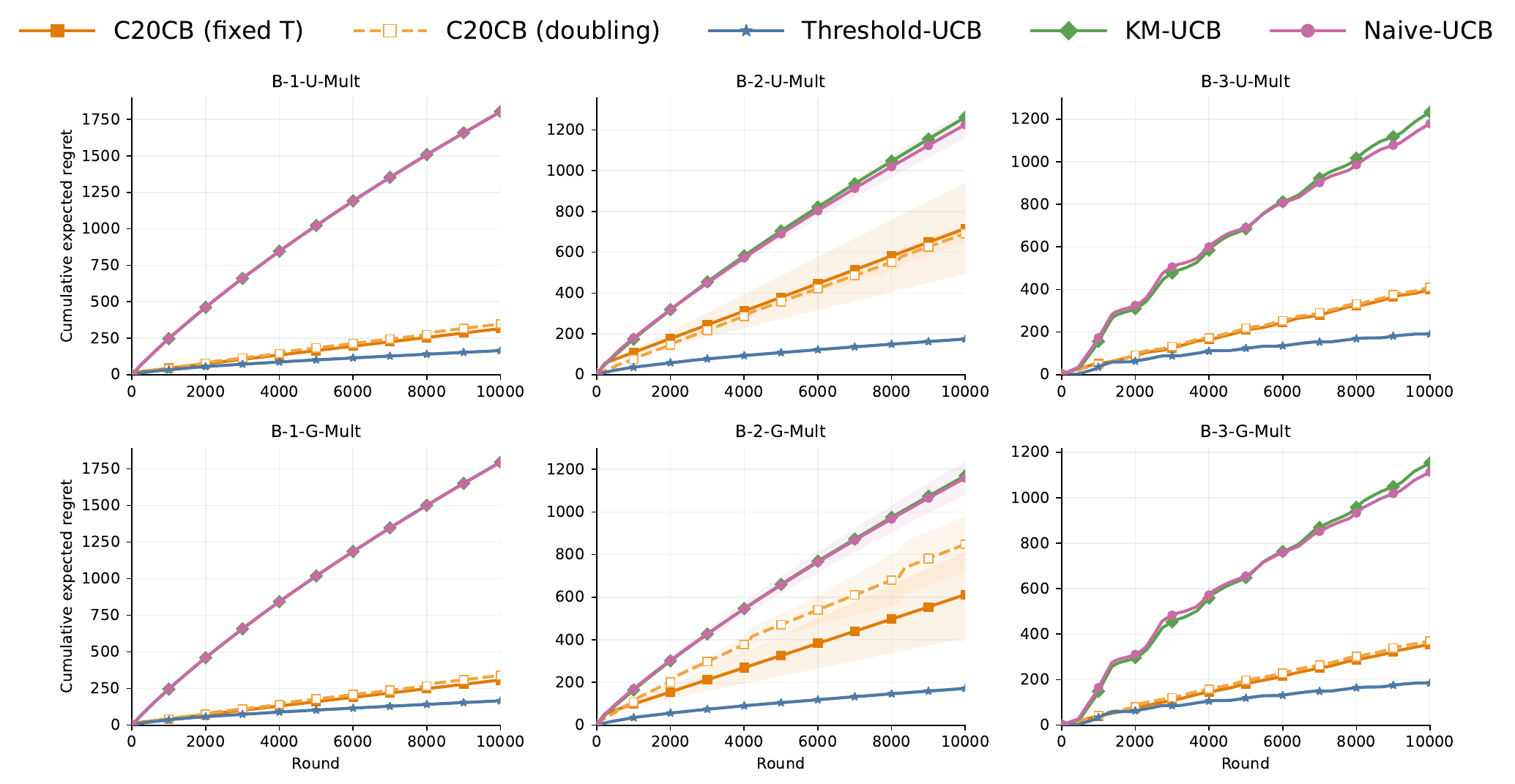}
\caption{Multiplicative-noise Demand B. Columns correspond to the central (\texttt{1}), broad (\texttt{2}), and piecewise-changing (\texttt{3}) inventory supports; the top and bottom rows use uniform (\texttt{U}) and clipped-Gaussian (\texttt{G}) inventory distributions, respectively. The nearly flat linear mean is combined with a price-dependent probability of low demand, which violates C20CB's common additive-noise assumption.}
\label{fig:mult-B}
\end{figure}

\begin{figure}[H]
\centering
\includegraphics[width=\textwidth]{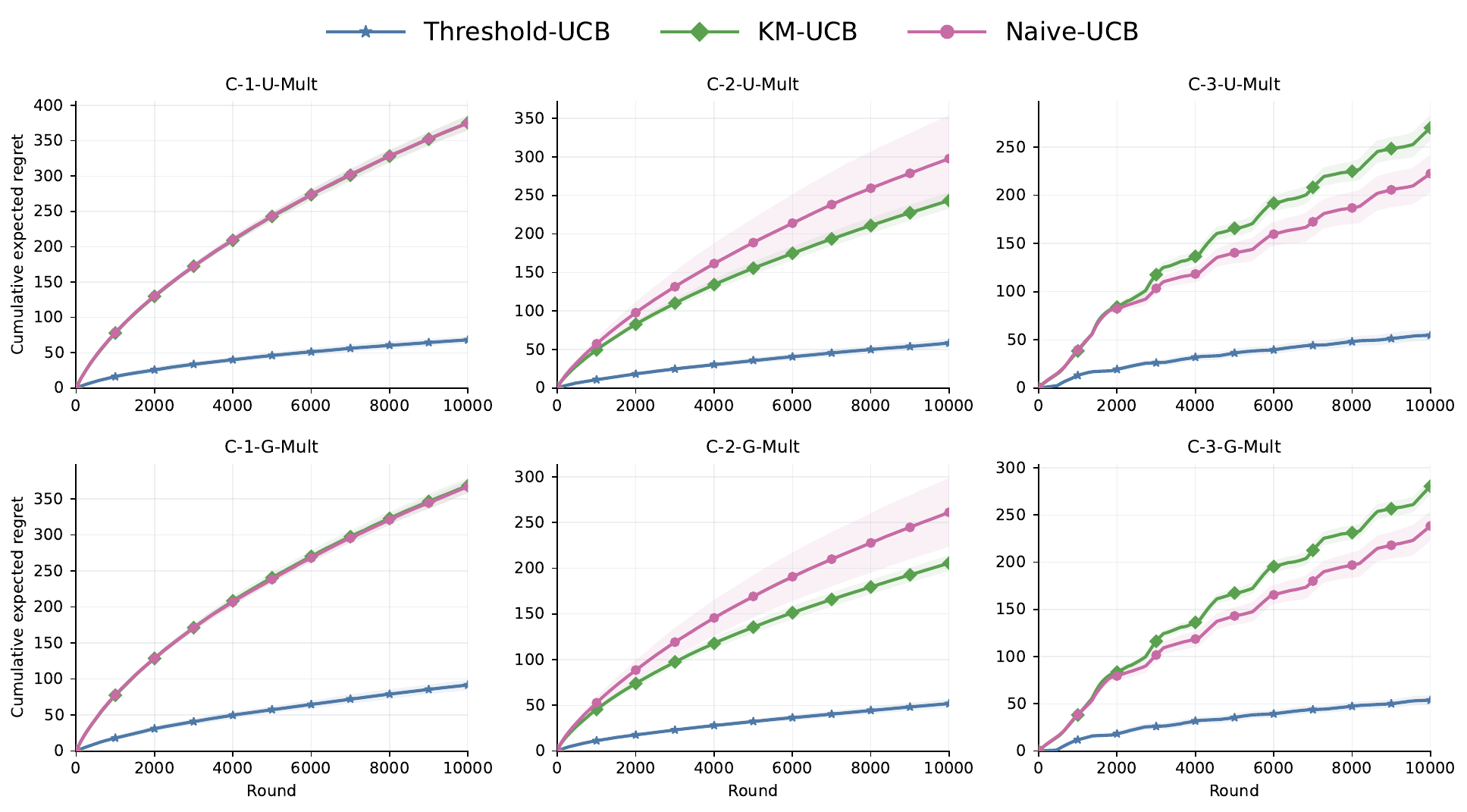}
\caption{Multiplicative-noise Demand C. Columns correspond to the central (\texttt{1}), broad (\texttt{2}), and piecewise-changing (\texttt{3}) inventory supports; the top and bottom rows use uniform (\texttt{U}) and clipped-Gaussian (\texttt{G}) inventory distributions, respectively. C20CB is omitted because the mean demand is nonlinear.}
\label{fig:mult-C}
\end{figure}

\begin{figure}[H]
\centering
\includegraphics[width=\textwidth]{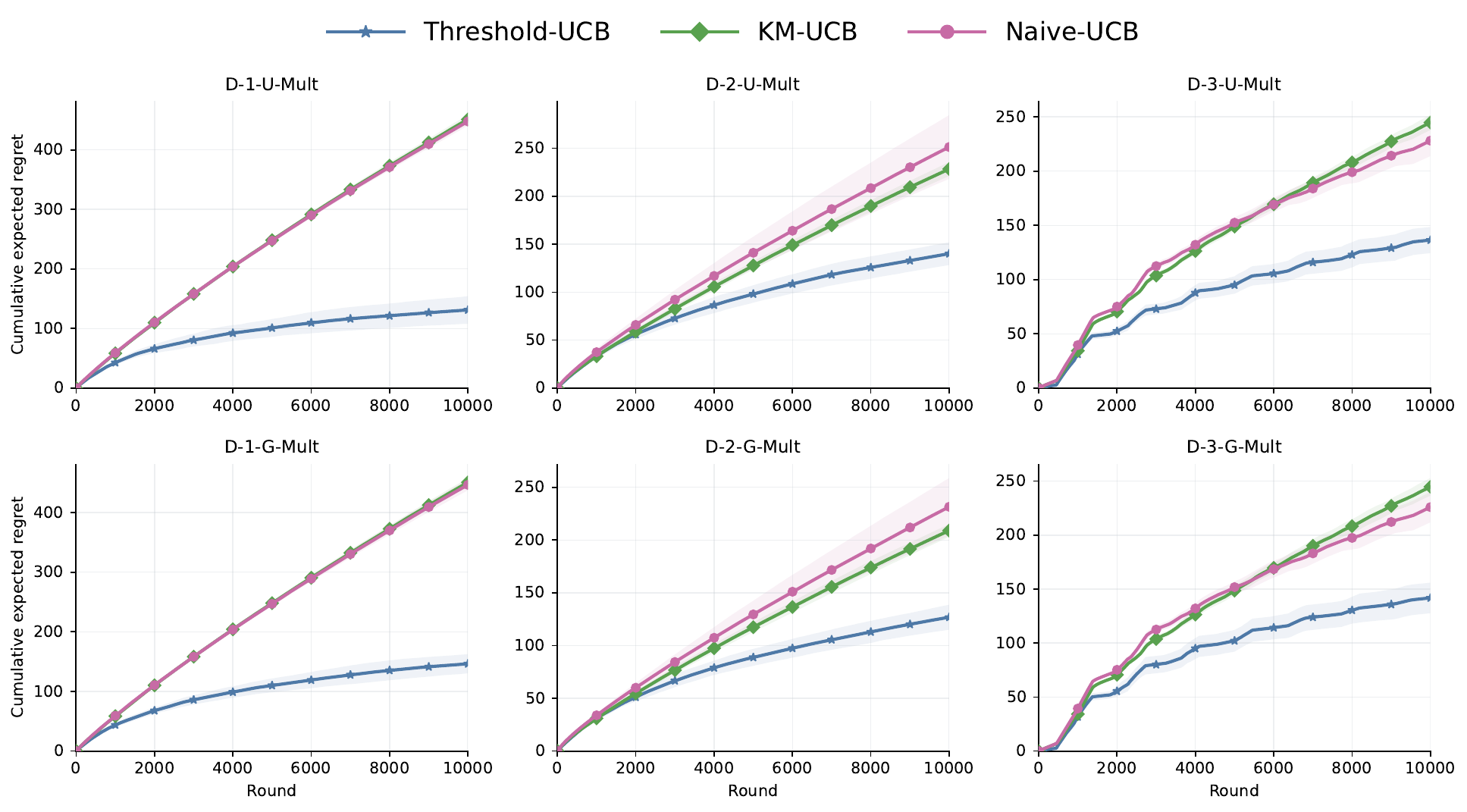}
\caption{Multiplicative-noise Demand D. Columns correspond to the central (\texttt{1}), broad (\texttt{2}), and piecewise-changing (\texttt{3}) inventory supports; the top and bottom rows use uniform (\texttt{U}) and clipped-Gaussian (\texttt{G}) inventory distributions, respectively. C20CB is omitted because the mean demand is nonlinear.}
\label{fig:mult-D}
\end{figure}
\section{Auxiliary Lemmas}\label{app:auxiliary}

\begin{lemma}[Hoeffding's inequality; \citealp{hoeffding1963probability}]\label{lem:hoeffding}
Let $W_1,\ldots,W_n$ be independent random variables taking values in $[a,b]$, where $a<b$, with common mean $\mu$. Then, for every $\epsilon>0$,
\begin{equation*}
\Prob\left(\left|\frac1n\sum_{r=1}^n W_r-\mu\right|\ge\epsilon\right)
\le2\exp\left(-\frac{2n\epsilon^2}{(b-a)^2}\right).
\end{equation*}
\end{lemma}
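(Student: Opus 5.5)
This is Hoeffding's inequality for independent bounded variables with a common mean. I would prove it by the Chernoff (exponential moment) method combined with Hoeffding's lemma on the moment generating function of a bounded, centered variable.

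\emph{Step 1 (Hoeffding's lemma).} For a random variable $V$ with $\E[V]=0$ and $V\in[a-\mu,b-\mu]$, I would show that $\E[e^{\lambda V}]\le\exp(\lambda^2(b-a)^2/8)$ for every $\lambda\in\mathbb R$.
\begin{itemize}
\item By convexity of $x\mapsto e^{\lambda x}$ on the interval, $e^{\lambda V}$ is bounded above by the chord through the endpoints. Taking expectations and using $\E[V]=0$ gives $\E[e^{\lambda V}]\le e^{\varphi(h)}$.
\item Here $h=\lambda(b-a)$ and $\varphi(h)=-\theta h+\log(1-\theta+\theta e^{h})$, with $\theta=(\mu-a)/(b-a)\in[0,1]$.
\item One checks that $\varphi(0)=\varphi'(0)=0$. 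Also $\varphi''(h)=\rho(1-\rho)\le 1/4$, where $\rho=\theta e^h/(1-\theta+\theta e^h)$.
\item Taylor's theorem then gives $\varphi(h)\le h^2/8$.
\end{itemize}
This calculus step is the only delicate part of the argument.

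\emph{Step 2 (Chernoff bound for the upper tail).} Write $S=\sum_{r=1}^n(W_r-\mu)$ and fix $\lambda>0$.
\begin{itemize}
\item Markov's inequality gives $\Prob(S\ge n\epsilon)\le e^{-\lambda n\epsilon}\E[e^{\lambda S}]$.
\item Independence factorizes the expectation into $\prod_r\E[e^{\lambda(W_r-\mu)}]$.
\item Step 1 bounds this product by $\exp(n\lambda^2(b-a)^2/8)$.
\item Optimizing the exponent over $\lambda$ gives $\lambda=4\epsilon/(b-a)^2$. The resulting bound is $\exp(-2n\epsilon^2/(b-a)^2)$.
\end{itemize}

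\emph{Step 3 (lower tail and conclusion).} Apply Step 2 to the variables $-W_r$. These lie in $[-b,-a]$, an interval of the same length, and have mean $-\mu$, so the lower tail obeys the same bound. A union bound over the two tails yields the factor $2$ in the statement.

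If citing the classical lemma is acceptable, the whole statement can instead be quoted directly from \citet{hoeffding1963probability}. The plan above makes the argument self-contained.
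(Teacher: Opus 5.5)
Your argument is correct and complete. The chord bound, the function $\varphi(h)=-\theta h+\log(1-\theta+\theta e^{h})$ with $\varphi(0)=\varphi'(0)=0$ and $\varphi''\le 1/4$, the Chernoff step with $\lambda=4\epsilon/(b-a)^2$, and the reflection to $-W_r$ followed by a union bound all check out. The paper gives no proof of its own and simply cites \citet{hoeffding1963probability}, whose original derivation is this same exponential-moment argument, so your route matches the intended one.
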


\end{document}